\documentclass{article}

\usepackage[main, final]{neurips_2026}

\usepackage[utf8]{inputenc} 
\usepackage[T1]{fontenc}    
\usepackage{hyperref}       
\usepackage{url}            
\usepackage{booktabs}       
\usepackage{amsfonts}       
\usepackage{nicefrac}       
\usepackage{microtype}      
\usepackage{xcolor}         

\usepackage{multirow}       
\usepackage{makecell}       
\usepackage{amsmath}        
\usepackage{algorithm}      
\usepackage{algorithmic}    
\usepackage{enumitem}       

\usepackage{graphicx}
\usepackage{subcaption}
\usepackage{wrapfig}
\usepackage{caption}
\newtheorem{theorem}{Theorem}[section]

\newenvironment{proof}{{\noindent\bf Proof.}}{\hfill $\square$\par}
\newenvironment{remark}{{\noindent\bf Remark.}}{}

\title{Towards Scalable Data Diversification for Language Model Pretraining via Leverage Score Sampling}

\author{%
  Zailin Ma$^{1,2}$\thanks{Email: mazailin@stu.pku.edu.cn. Work done during an internship at Wizard Intelligence Learning Lab.} \quad
  Quzhe Huang$^{2}$ \quad
  Yujun Li$^{2}$ \quad
  Congyuan Rao$^{3}$ \quad
  Yaodong Yang$^{1}$\thanks{Corresponding author: yaodong.yang@pku.edu.cn} \\[1em]
  $^{1}$Peking University \quad
  $^{2}$Wizard Intelligence Learning Lab \quad
  $^{3}$Tsinghua University
}

\begin{document}

\maketitle

\begin{abstract}
  Data selection for language model pretraining faces a fundamental tension between quality and diversity. While quality filtering is empirically effective, it often induces diversity collapse: by favoring texts similar to high-quality reference corpora (e.g., educational or QA-style data), it systematically excludes valuable data from underrepresented domains. In contrast, diversified selection preserves domain balance and encourages robust downstream performance, yet existing methods either focus on coverage-oriented objectives that indirectly enhance diversity, or directly optimize for diversity via costly covariance matrix recomputation that limits scalability. To address these issues, we introduce \textbf{Leverage Score Sampling (Lev)}, which iteratively selects samples that maximally expand the determinantal volume of the embedded data via leverage scores, a computationally efficient criterion that eliminates matrix recomputation and enables scalable selection. Empirically, Lev delivers up to $72\times$ speedup and improves dataset diversity, measured by the Vendi score, by $9.2\%$ over the strong diversification baseline \textbf{DiSF}. On CommonCrawl (CC) web data selection, Lev improves accuracy across seven downstream tasks by up to $1.31\%$ over existing baselines. For domains where robust quality criteria are inherently difficult to define (e.g., code), Lev serves as an effective unsupervised curation alternative: on StarCoderData, the selected subset reduces bits-per-byte by $3.08\%$ over DiSF. Notably, we uncover a cross-domain collapse of quality filtering: CC data filtered by DCLM-fastText fail to retain sufficient code-related content, yielding inferior code performance relative to Lev-selected data. These findings advocate for integrating diversity-aware practices into quality filtering for more effective data curation in language model pretraining.
\end{abstract}

\begin{figure}[h]
    \centering
    \vspace{-10pt}
    \includegraphics[width=0.93\linewidth]{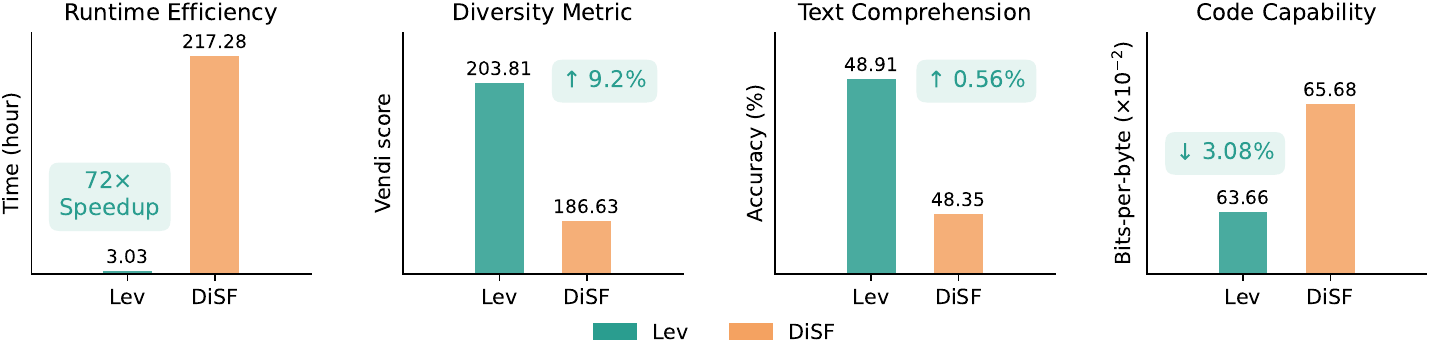}
    \vspace{-5pt}
    \caption{Lev outperforms existing diversification baseline DiSF across all fronts.}
    \vspace{-15pt}
\end{figure}

\section{Introduction}
\label{sec:introduction}

Large language models (LLMs) have achieved unprecedented performance in general text comprehension and problem-solving tasks~\citep{deepseekai2026deepseekv4,kimiteam2026kimik25visualagentic}, with their capabilities critically hinging on the composition of pretraining data. Extensive research has sought to curate higher-quality corpora from raw web sources such as CommonCrawl (CC)~\citep{soldaini2024dolma,penedo2023refinedweb}. Among these efforts, model-based quality filtering~\citep{penedo2024fineweb,wettig2024qurating} has emerged as the dominant mechanism owing to its superior empirical performance. Practitioners train lightweight classifiers such as fastText~\citep{joulin2016bagtricksefficienttext} using curated question-answering or LLM-labeled high-quality texts as positive samples~\citep{li2024datacomp,wang2025ultrafinewebefficientdatafiltering}. However, this approach induces \textbf{dimensional collapse}~\citep{fan2025combatting}: the embedding vectors of the filtered data span a low-dimensional subspace, indicating severe diversity degradation from a geometric perspective, as the classifier favors data similar to the narrow references but neglects out-of-domain instances. Consequently, the resulting models exhibit performance degradation on underrepresented domains of the curated corpora.

To mitigate this degradation, diversified data selection has emerged as a natural solution. Early efforts address this through domain-level organization~\citep{wettig2025organize} or attribute-level balancing~\citep{zhang2025harnessing,liu2025quadmix}, which only indirectly alleviate dimensional collapse by operating at a coarse granularity. Finer-grained approaches, such as coverage-based sampling~\citep{shao2024balanceddatasamplinglanguage,sachdeva2026askllm}, pursue uniformity over the data distribution; yet uniform density does not guarantee that selected samples span the full representation space needed for geometric diversity. To directly optimize set-level geometric diversity, DiSF~\citep{fan2025combatting} greedily selects samples that minimize the Frobenius norm of the covariance matrix, achieving strong downstream performance. However, this procedure requires per-candidate covariance recomputation at every iteration, incurring prohibitive computational overhead that severely limits scalability.

Motivated by these limitations, we propose \textbf{Leverage Score Sampling (Lev)}, a scalable scheme that employs leverage scores~\citep{7439920,10.5555/3327144.3327172,9747039} as a principled metric for diversity improvement. Grounded in the geometric insight that dataset diversity corresponds to the determinantal volume of the representation subspace spanned by embedded samples~\citep{wang2024diversity}, we prove that the leverage score of a candidate exactly quantifies its contribution to expanding this volume upon inclusion in the selected subset (Theorem~\ref{geothm}). Building on this result, Lev iteratively selects samples with the highest leverage scores to directly maximize geometric diversity (Figure~\ref{geointuition}). Unlike DiSF, which requires costly covariance matrix recomputation for each candidate to evaluate its marginal diversity contribution, Lev evaluates all candidates via quadratic expressions with a single shared Gram matrix, achieving superior computational efficiency and enabling scalable applications.

\begin{figure}[t]
    \centering
    \includegraphics[width=\linewidth]{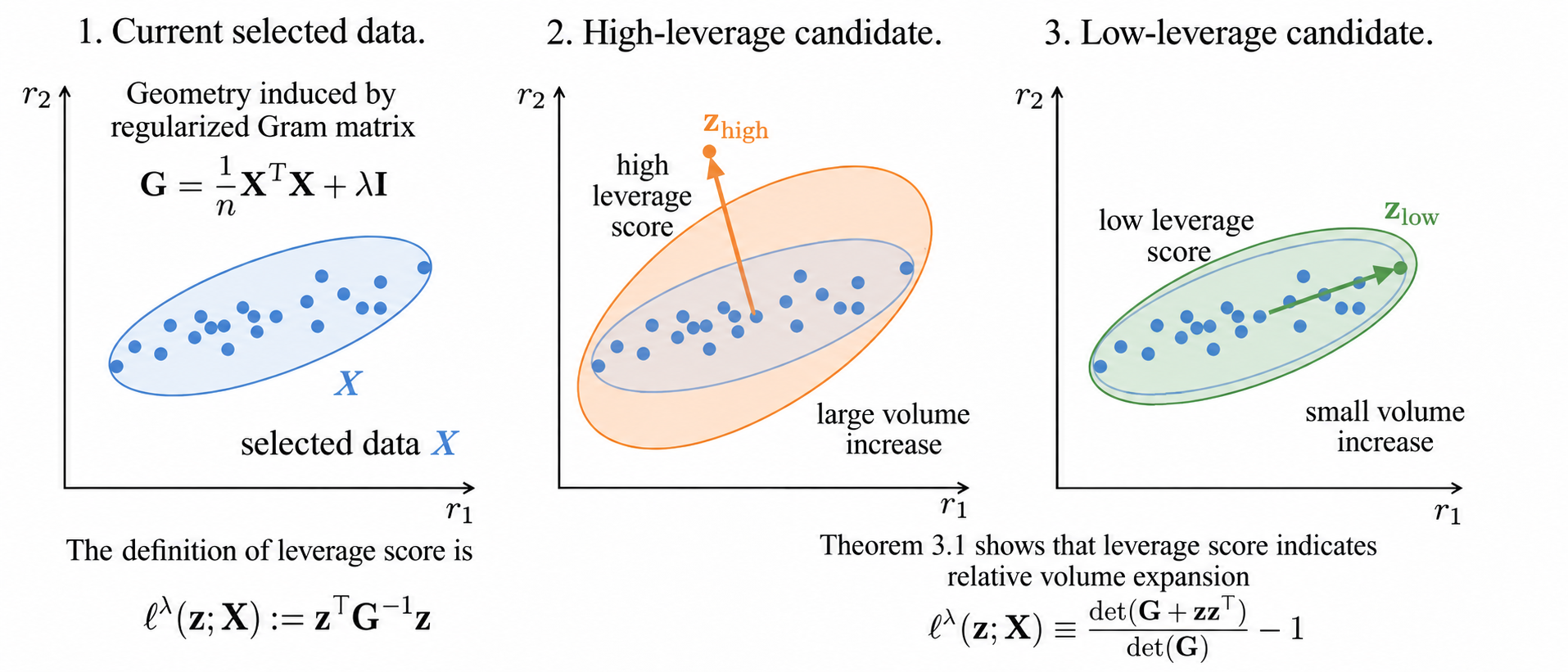}
    \caption{Leverage score is efficiently computed via a quadratic form (left). It quantifies the relative volume expansion of the candidate sample once selected (right), enabling effective diversification.}
    \label{geointuition}
\end{figure}

We validate Lev through extensive experiments. Compared to the strong diversification baseline DiSF, it improves dataset diversity by 9.2\% while achieving a $72\times$ runtime speedup. On CommonCrawl (CC) web data subset selection, it improves average accuracy across seven downstream tasks by up to $1.31\%$ over existing baselines. Crucially, Lev requires no curated quality references, making it uniquely effective for domains where defining quality is inherently ambiguous (e.g., code): on StarCoderData, it reduces bits-per-byte by $3.08\%$ over DiSF. 
Beyond these gains, our analysis uncovers that the prevailing quality filter DCLM-fastText~\citep{li2024datacomp} induces a \textbf{code-specific dimensional collapse}: it insufficiently retains code-related content from CC compared to Lev, causing persistent code inferiority relative to diversity-based selection. 
Our contributions are summarized as follows.
\begin{itemize}[leftmargin=1.4em, itemsep=0pt, topsep=0pt]
    \item We establish leverage scores as a principled metric for diversity enhancement and propose \textbf{Leverage Score Sampling (Lev)}, a scalable diversification algorithm. We show that Lev is provably more efficient than DiSF and demonstrate up to $72\times$ speedup in practice.
    \item We demonstrate that Lev achieves superior downstream performance over existing baselines on CommonCrawl web data selection. For domains lacking robust quality criteria (e.g., code), we show that Lev serves as an effective \textbf{unsupervised} curation strategy: on StarCoderData, Lev selects subset that reduces bits-per-byte by $3.08\%$ over DiSF.
    \item We uncover that quality filtering induces cross-domain dimensional collapse, disproportionately excluding code-related content that diversity-based methods effectively preserve. This finding underscores the necessity of integrating diversity-aware selection into quality filtering for more robust data curation.
\end{itemize}

\section{Preliminaries}
\subsection{Problem statement}
\label{subsec:problem_statement}

We study the problem of selecting a diverse subset of pretraining data for language models subject to a fixed selection ratio. Let $\mathcal{D} = \{c_1, \dots, c_D\}$ denote a corpus where each $c_i$ is a document. Examples include heuristically filtered and deduplicated web text (e.g., CommonCrawl) or source code (e.g., GitHub). Each document $c_i$ is mapped to a $d$-dimensional \textit{unit vector} via an embedding model $\phi$, i.e., $x_i = \phi(c_i) \in \mathbb{R}^{d}$ and $\|x_i\|_2=1$. Given a target selection ratio $\alpha$, we formulate the data selection problem as:
\begin{equation} 
\label{eq:problem} 
\mathcal{S}^* = \mathop{\mathrm{argmax}}_{\substack{\mathcal{S} \subset \mathcal{D} \\ |\mathcal{S}| = \alpha D}} \mathrm{Diversity}(\mathcal{S}), 
\end{equation} 
where $\mathrm{Diversity}(\mathcal{S})$ denotes a set function that quantifies the diversity of the subset $\mathcal{S}$. 

\subsection{Recent progress}
\label{recentprogress}
Existing approaches to diversity-aware data curation can be broadly categorized into heuristic-guided diversity methods and methods that directly optimize set-level diversity. A large body of prior work falls into the former category, encouraging diversity through indirect signals such as domain-level organization~\citep{wettig2025organize}, attribute-level balancing based on aggregated quality signals~\citep{zhang2025harnessing,liu2025quadmix,fan2025joint}, or coverage-based sampling that promotes uniform coverage via clustering or local density estimates~\citep{shao2024balanceddatasamplinglanguage,sachdeva2026askllm}. Among these approaches, coverage-based methods come closest to capturing sample-level diversity. However, density-based uniformity does not necessarily imply geometric diversity: in high-dimensional spaces, uniformly distributed samples may still lie in a low-dimensional subspace, thereby failing to expand the representational subspace. In contrast, geometry-based methods explicitly optimize $\mathrm{Diversity}(\mathcal{S})$ in Eq.~\eqref{eq:problem}. For instance, DiSF~\citep{fan2025combatting} greedily selects $x_i$ by maximizing $f(x_i) = e^{-\|C(\mathcal{S} \cup \{x_i\})\|_F}$, where $C(\mathcal{S})$ denotes the normalized feature covariance matrix, encouraging a more uniform eigenspectrum. However, evaluating this score requires expensive iterative covariance updates, which severely limits scalability. This computational bottleneck motivates us to seek an alternative geometric objective that directly spans the representation subspace without repeated covariance tracking. A broader review of related work is provided in Appendix~\ref{subsec:recent_progress}.

\section{Method}
\label{sec:method}

\subsection{Leverage score as diversity signal}
\label{subsec:leverage_scores}

To address this computational bottleneck, we introduce leverage scores as an effective and scalable diversity signal. Leverage scores were originally used for scalable kernel approximation in ridge regression~\citep{7439920,10.5555/3327144.3327172,9747039,10.5555/3294996.3295140,10.5555/2969239.2969326}, where they identify samples that most effectively span the representational space for efficient Gram matrix approximation. Grounded in this geometric insight, we use leverage scores as a principled measurement for dataset diversity spanning. Specifically, let $X = [x_1,\ldots,x_n]^\top \in \mathbb{R}^{n\times d}$ denote the embedding vectors of the selected data pool. For a candicate sample with embedding $z \in \mathbb{R}^d$, the leverage score of $z$ with respect to $X$ is defined as
\begin{equation}
\label{eq:leverage_score}
  \ell^{\lambda}(z;X) := z^\top \left(\frac{1}{n}X^\top X + \lambda I_d\right)^{-1} z,
\end{equation}
where $\lambda > 0$ is a regularization constant that ensures numerical stability (e.g., $\lambda=10^{-5}$). Let $G=\frac{1}{n}X^\top X + \lambda I_d$ be the corresponding regularized Gram matrix. We provide Theorem~\ref{geothm} that rigorously describes the geometric insight of leverage scores and Figure~\ref{geointuition} for a concrete illustration.
\begin{theorem}
    \label{geothm}
    For a candidate sample $z\in \mathbb{R}^d$, the following relationship holds:
    \begin{equation}
    \ell^{\lambda}(z;X) = \frac{\det(G+z z^\top)}{\det(G)}-1.
    \end{equation}
\end{theorem}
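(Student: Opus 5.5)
The identity is the matrix determinant lemma applied to the rank-one update $G + zz^\top$. The plan is to factor out $G$ and reduce everything to the determinant of an identity-plus-rank-one matrix.

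First I record that $G=\frac{1}{n}X^\top X+\lambda I_d$ is symmetric positive definite. Indeed, $X^\top X$ is positive semidefinite and $\lambda>0$, so every eigenvalue of $G$ is at least $\lambda$. Hence $G$ is invertible, $\det(G)>0$, and the ratio in the statement is well defined.

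Next I write
\begin{equation*}
G+zz^\top = G\left(I_d + G^{-1}zz^\top\right).
\end{equation*}
Multiplicativity of the determinant then gives $\det(G+zz^\top)=\det(G)\,\det(I_d+uz^\top)$ with $u:=G^{-1}z$. It remains to show that
\begin{equation*}
\det(I_d+uz^\top)=1+z^\top u.
\end{equation*}
I would prove this directly from the spectrum of the rank-one matrix $uz^\top$. Every vector orthogonal to $z$ lies in its kernel, which gives the eigenvalue $0$ with multiplicity at least $d-1$. Its trace is $z^\top u$, so the remaining eigenvalue is $z^\top u$. Therefore $I_d+uz^\top$ has eigenvalues $1$ with multiplicity $d-1$ together with $1+z^\top u$, and its determinant is their product $1+z^\top u$.

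If the eigenvalue count feels delicate when $z^\top u=0$ or $z=0$, I would instead use the block identity
\begin{equation*}
\begin{pmatrix} I_d & 0\\ z^\top & 1\end{pmatrix}\begin{pmatrix} I_d+uz^\top & u\\ 0 & 1\end{pmatrix}\begin{pmatrix} I_d & 0\\ -z^\top & 1\end{pmatrix}=\begin{pmatrix} I_d & u\\ 0 & 1+z^\top u\end{pmatrix}.
\end{equation*}
Taking determinants of both sides yields the lemma with no case analysis.

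Combining the steps, $\det(G+zz^\top)/\det(G)=1+z^\top G^{-1}z=1+\ell^{\lambda}(z;X)$ by definition \eqref{eq:leverage_score}, and subtracting $1$ gives the theorem. The only step needing care is this rank-one determinant identity, and the block-matrix argument settles it cleanly. Positive definiteness of $G$ is only needed to justify invertibility.
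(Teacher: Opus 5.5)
Your proof is correct and follows the same route as the paper, which simply cites the matrix determinant lemma $\det(G+uv^\top)=\det(G)\,(1+v^\top G^{-1}u)$ with $u=v=z$ and rearranges. You additionally prove that lemma via the factorization $G(I_d+G^{-1}zz^\top)$; your block-matrix identity multiplies out correctly. You also justify invertibility of $G$ from $\lambda>0$, a point the paper leaves implicit.
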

The determinant of the regularized Gram matrix $G$ can be interpreted as the squared volume of the polytope spanned by $X$ in the representation space~\citep{doi:10.1137/1.9781421407944}. The theorem reveals that the leverage score directly quantifies the relative volume expansion induced by adding the candidate sample $z$ to the current data pool. Samples with higher leverage scores yield larger volume expansion, and therefore contribute more substantially to the geometric diversity of the selected set.

\subsection{Algorithm}
\label{subsec:algorithm}

\begin{algorithm}[tbp]
\caption{Leverage Score Sampling for Diversified Data Selection}
\label{alg:leverage_sampling}
\begin{algorithmic}[1]
\STATE \textbf{Input:} Full corpus embeddings $\mathcal{Z} = \{z_1,\ldots,z_D\} \subset \mathbb{R}^d$, selection ratio $\alpha \in (0,1)$, batch size $B$, step size $b$, regularization $\lambda$
\STATE \textbf{Output:} Selected indices $\mathcal{S}$
\STATE $M \leftarrow \lceil D / B \rceil$ \COMMENT{batch number}
\FOR{each batch $\mathcal{B} \subset \{\mathcal{B}_1,\ldots,\mathcal{B}_M\}$}
    \STATE Randomly select $b$ indices from $\mathcal{B}$, add to $\mathcal{S}$
    \STATE $K \leftarrow \sum_{i \in \mathcal{S}} z_i z_i^\top$ \COMMENT{initialize Gram matrix}
    \FOR{$t = 1$ to $\lceil \alpha B / b \rceil-1$}
        \STATE $G \leftarrow \frac{1}{|\mathcal{S}|}K + \lambda I_d$ \COMMENT{regularized Gram matrix}
        \STATE $L \leftarrow \text{Cholesky}(G)$ \COMMENT{Cholesky decomposition, $G = LL^\top$}
        \FOR{each candidate $i \in \mathcal{B}$}
            \STATE $w \leftarrow L^{-1} z_i$ \COMMENT{solve triangular system}
            \STATE $\ell_i \leftarrow \|w\|_2^2$ \COMMENT{leverage score}
        \ENDFOR
        \STATE Sort candidates in $\mathcal{B}$ by leverage scores in descending order
        \STATE Select top-$b$ indices not in $\mathcal{S}$, denote as $\mathcal{I}_{\text{new}}$
        \STATE $\mathcal{S} \leftarrow \mathcal{S} \cup \mathcal{I}_{\text{new}}$
        \STATE $K \leftarrow K + \sum_{i \in \mathcal{I}_{\text{new}}} z_i z_i^\top$ \COMMENT{update Gram matrix}
    \ENDFOR
\ENDFOR
\STATE \textbf{return} $\mathcal{S}$
\end{algorithmic}
\end{algorithm}

We present our diversified data selection method \textit{Leverage Score Sampling (Lev)} in Algorithm~\ref{alg:leverage_sampling}. Let $D$ denote the corpus size, $B$ the batch size, $b$ the step size, $d$ the embedding dimension and $\alpha$ the selection ratio. In practice, $B$ and $d$ are typically of the same order of magnitude. The algorithm processes the full corpus in batches of size $B$, requiring approximately $D/B$ batch iterations. Within each batch, we perform $m = \alpha B / b $ sub-iterations to select $b$ samples at each step, resulting in $\alpha B$ samples in total. At each iteration, the leverage scores of candidate samples are computed relative to the selected data pool $X$, and samples with the highest score are added to $X$. Then we update the Gram matrix with the newly selected data for the next round of selection. The algorithm has two key computational advantages. First, we employ Cholesky decomposition~\citep{doi:10.1137/1.9781421407944} on the Gram matrix to compute leverage scores. Instead of directly inverting $\Sigma$, we solve $Lw = z_i$ and compute $\ell_i = \|w\|_2^2$, which is algebraically equivalent to computing $\ell_i = z_i^\top \Sigma^{-1} z_i$, but is numerically more stable and efficient. Second, the computation of leverage scores for all candidate samples rely on a single shared Gram matrix $G$, which can be efficiently maintained by incrementally updating the feature covariance matrix $K$ through the selection process. This obviates the need to recompute feature covariance matrix for every candidate sample, thus achieving significantly better computational efficiency than DiSF~\cite{fan2025combatting}. Codes are available at: \url{https://github.com/mazailin/LevDataDiversity}.

\subsection{How do leverage scores identify underrepresented semantics?}
\label{subsec:mathematical_interpretation}

To further dive into the mechanism of how exactly leverage score works, we consider the structure of the Gram matrix. Let $X/\sqrt{n} = U\Sigma V^\top$ be the singular value decomposition, where $U$ and $V$ are orthonormal matrices and $\Sigma=\text{diag}\{\sigma_1,\ldots,\sigma_d\}$. Then $\sigma_i^2$ is the $i$-th eigenvalue of $X^\top X/n$. We define the columns $v_i$ of $V$ as the \textit{principal semantic vectors} of the current data pool $X$. Because $\sum_{i=1}^{d}\sigma_i^2 = 1$, the eigenvalue $\sigma_i^2$ reflects the \textit{occurrence frequencies} of $v_i$ (see Appendix \ref{intfre} for details). More frequently occured semantics share larger values of $\sigma_i^2$. For a new candidate $z$, let $\tilde{z}_i = v_i^\top z$ denote its projection onto the $i$-th principal semantic vector. We have the following result.

\begin{theorem}
    \label{reprethm}
    The leverage score function has a decomposed expression as follows.
    \begin{equation}
    \ell^{\lambda}(z;X) = \sum_{i=1}^{d} \frac{\tilde{z}_i^2}{\sigma_i^2+\lambda}.
    \end{equation}
\end{theorem}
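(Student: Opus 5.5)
The plan is to diagonalize the regularized Gram matrix $G$ in the orthonormal basis $\{v_i\}$ given by the right singular vectors of $X/\sqrt{n}$, and then read off the quadratic form defining $\ell^{\lambda}(z;X)$ in Eq.~\eqref{eq:leverage_score} coordinate by coordinate. No earlier result beyond the definition of the leverage score and the SVD introduced just before the statement is needed.

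\emph{Step 1: diagonalize $G$.} From $X/\sqrt{n}=U\Sigma V^\top$ with $U^\top U = I$, we get
\[
\tfrac{1}{n}X^\top X = V\Sigma U^\top U\Sigma V^\top = V\Sigma^2V^\top .
\]
Since $V$ is a $d\times d$ orthogonal matrix, $\lambda I_d = \lambda VV^\top$, and therefore
\[
G = V(\Sigma^2+\lambda I_d)V^\top .
\]
\emph{Step 2: invert $G$.} The diagonal matrix $\Sigma^2+\lambda I_d$ has strictly positive entries $\sigma_i^2+\lambda\ge\lambda>0$, so it is invertible. Orthogonality of $V$ then gives
\[
G^{-1}=V(\Sigma^2+\lambda I_d)^{-1}V^\top .
\]
\emph{Step 3: evaluate the quadratic form.} Substituting into the definition,
\[
\ell^{\lambda}(z;X)=z^\top G^{-1}z=(V^\top z)^\top(\Sigma^2+\lambda I_d)^{-1}(V^\top z).
\]
The $i$-th entry of $V^\top z$ is $v_i^\top z=\tilde z_i$. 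Expanding the diagonal quadratic form gives $\sum_{i=1}^d \tilde z_i^2/(\sigma_i^2+\lambda)$, which is the claimed expression.

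\emph{Main obstacle.} The only delicate point is the SVD convention, which must make $V$ a full $d\times d$ orthogonal matrix with $\Sigma$ square of size $d$. When $n<d$, or when $X$ is rank-deficient, the thin SVD yields only $\operatorname{rank}(X)$ singular vectors. In that case I would take the full SVD, completing $V$ to an orthonormal basis of $\mathbb{R}^d$ and setting the corresponding $\sigma_i=0$. Equivalently, one can use the eigendecomposition of the symmetric positive semidefinite matrix $X^\top X/n$ directly, whose eigenvalues $\sigma_i^2$ include any zeros.

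With that convention, the identity $VV^\top=I_d$ used in Step 1 holds. Directions $v_i$ not spanned by $X$ then contribute $\tilde z_i^2/\lambda$, which makes explicit that the regularization $\lambda>0$ is what keeps the formula finite.
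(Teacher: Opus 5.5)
Your proposal is correct and follows essentially the same route as the paper's proof: write $\frac{1}{n}X^\top X+\lambda I_d = V(\Sigma^\top\Sigma+\lambda I_d)V^\top$ via the SVD, invert it using orthogonality of $V$, and expand the quadratic form in the coordinates $\tilde z_i = v_i^\top z$. Your remark about taking the full SVD (or the eigendecomposition of $X^\top X/n$) when $n<d$ or $X$ is rank-deficient is a useful clarification that the paper leaves implicit in its $\Sigma^\top\Sigma$ notation.
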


\begin{figure}[tbp]
    \centering
    \includegraphics[width=\linewidth]{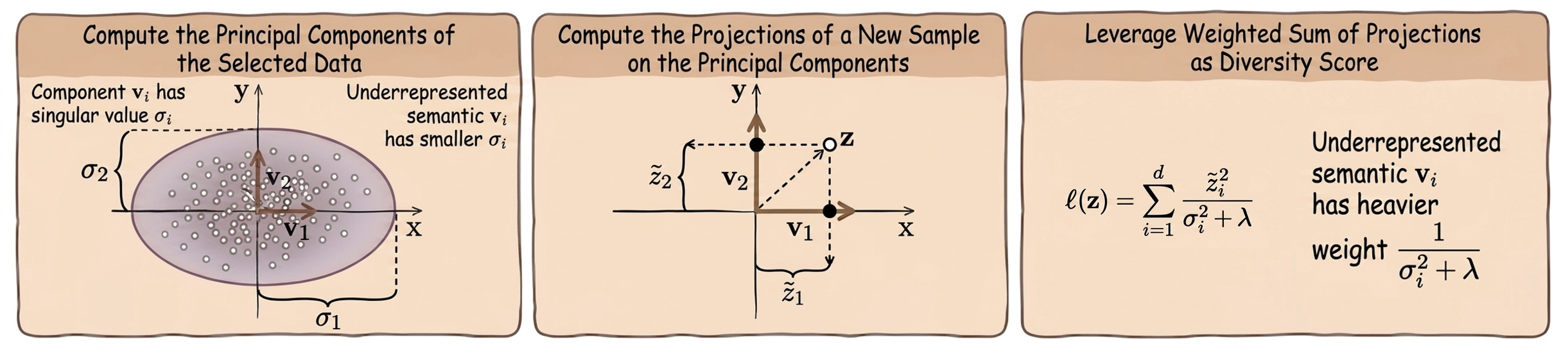}
    \caption{The statistical mechanism of leverage score functions: underrepresented semantic components are assigned higher weights, highlighting samples that stretch along these components.}
\end{figure}

This theorem reveals that $\ell^{\lambda}(z;X)$ aggregates the contributions of $z$ across all principal semantic vectors, weighted by the \textit{inverse} of their occurrence frequencies and the regularization parameter \(\lambda \ll 1\). If $v_i$ has almost never occurred in $X$, then $\sigma_i^2 \approx 0$. Thus the magnitude of $1/(\sigma_i^2+\lambda)\approx 1/\lambda$ becomes significant such that it effectively highlights samples that cover more semantics along $v_i$. If $\sigma_i^2 > 0$, then $1/(\sigma_i^2+\lambda)\approx 1/\sigma_i^2$ is dominated by the occurrence frequency, where semantics with lower frequencies share heavier weights. Grounded in this statistical mechanism, the score is named after \textit{leverage}. Empirically, setting \(\lambda = (100\times d)^{-1}\) is sufficient to highlight the uncovered semantic signals. We provide the proofs of both theorems and a detailed discussion on \(\lambda\) in Appendex~\ref{math}.

\section{Experiments}
\label{sec:experiments}

\vspace{-3pt}
\subsection{Setup}
\label{subsec:setup}
\vspace{-2pt}

\textbf{Datasets and models.} We consider two domain data, web and code. For web data, we process a \textbf{CommonCrawl (CC)} dump with \texttt{datatrove}~\cite{penedo2024fineweb} following similar pipeline steps in FineWeb~\cite{penedo2024fineweb}, including heuristic filtering and deduplication and excluding quality filtering to preserve the original degree of data diversity. For code data, we employ \textbf{StarCoderData}~\cite{li2023starcoder} and restrict our experiments to six programming languages including Python, Java, JavaScript, Go, PHP, and Ruby. We adopt the model architecture of Llama 3.1~\citep{grattafiori2024llama3herdmodels} with an adapted configuration of 1.1B parameters detailed in Appendix~\ref{model_details}, and consider a training budget of 60B token data sampled by various strategies. With a batch size of 1024 and a context length of 2048, this configuration results in 28610 training iterations. The training configuration is applied for both the main experiments and the ablation study.

\textbf{Baselines.} For diversity-based methods, we include \textbf{DiSF}~\citep{fan2025combatting} that maximizes the uniformity of the eigenvalues of the data covariance matrix, and \textbf{Density} sampling~\citep{sachdeva2026askllm} that uses density estimation for uniform sampling over the original corpus. \textbf{DSIR}~\citep{xie2023data} matches samples with targeted domains, \textit{Wikipedia} and \textit{Book}, using n-gram features. \textbf{Random} sampling provides a non-selective baseline. 
Additionally, we apply \textbf{DCLM-fastText}~\cite{li2024datacomp} to obtain high-quality subsets from CC for comparative experiments. DCLM-fastText is a strong quality scorer widely used for CC quality filtering. It is trained on positive samples of instruction-formatted and question-answer style text spanning a broad range of topics, along with negative samples from the original RefinedWeb~\citep{penedo2023refinedweb}.
Our method, \textbf{Lev}, introduces the leverage score function as a principled diversity measurement. For each baseline, we consider two selection ratios, 40\% and 50\%, for empirical validations.

\textbf{Evaluations.} We evaluate all models using the OLMES framework \citep{gu2025olmes}. For natural language understanding, we assess performance on seven established benchmarks: \textbf{ARC-easy}, \textbf{ARC-challenge} \citep{clark2018think}, \textbf{WinoGrande} \citep{sakaguchi2020winogrande}, \textbf{HellaSwag} \citep{zellers2019hellaswag}, \textbf{PIQA} \citep{bisk2020piqa}, \textbf{Social IQa} \citep{sap2019socialiqa} and \textbf{CommonSenseQA} \citep{talmor2019commonsenseqa}. The choice with the lowest perplexity is considered the response of the model and the overall accuracy is the primary metric. We define the mean score of the seven tasks as \textbf{Base Mean}, describing the base language understanding ability of a model. For code evaluation, we adopt tasks of six languages in \textbf{Multilingual MBPP}~\citep{cassano2022multiplescalableextensibleapproach} that correspond to our datasets and \textbf{HumanEval}~\citep{chen2021evaluating}. Bits per byte (BPB) is the primary metric. We define the mean BPB value of the seven benchmarks as \textbf{Code Mean}. 

\subsection{Main results}
\label{mainres}

\subsubsection{Lev outperforms existing diversity-based selection methods}
\label{efflev}

We conduct a systematic comparison against existing diversity-based methods across three dimensions: direct dataset diversity evaluation, downstream language performance on CommonCrawl subsets and downstream code performance on StarCoderData subsets. For diversity measurements, we adopt seven diversity metrics from~\cite{yang2025measuring} to evaluate dataset diversity. Specifically, CosDist, KNN, and CluIne are distance-based metrics that emphasize sample-level uniqueness or dispersion. Entropy, Vendi score, and LogDet are spectrum-based metrics that reflect geometric diversity through the eigenstructure of the Gram or covariance matrix. NovelSum integrates both perspectives for a more comprehensive diversity characterization. We normalize the raw values of each metric via affine transformations that map the observed minimum and maximum to 0.3 and 1.0 respectively. The normalized values are shown in Figure~\ref{radar}. We provide full definitions of metrics in Appendix~\ref{div_define} and full results in Appendix \ref{table_div_detailed}.

\begin{wrapfigure}{r}{0.33\linewidth}
  \vspace{-8pt}
  \centering
  \includegraphics[width=\linewidth]{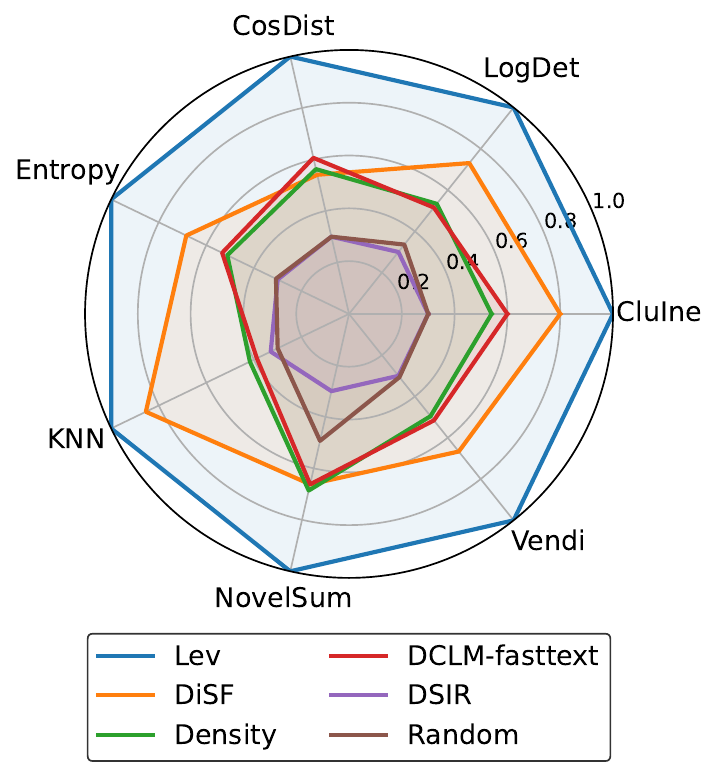}
  \caption{Diversity comparison.}
  \label{radar}
\end{wrapfigure}

\textbf{Diversity measurement of selected subsets.} Figure~\ref{radar} shows that Lev achieves the best performance across all diversity metrics on Problem~\eqref{eq:problem}, surpassing the strong diversity-oriented baseline, DiSF. Specifically, Lev improves the Vendi score by 21.3\% over Random (168.06 vs. 203.81) and by 9.2\% over DiSF (186.63 vs. 203.81). In contrast, DSIR collapses into two specific domains (Wikipedia and books), leading to severe diversity degradation. Density, being a coverage-oriented method, does not directly optimize for geometric diversity and thus trails behind DiSF and Lev. Finally, Lev outperforms DiSF because it better captures the per-sample contribution to subspace span, eventually yielding selected datasets of substantially higher diversity. We provide a comparative theoretical analysis over the optimization objectives of Lev and DiSF in Appendix \ref{divopt}, which shows that Lev obtains better diversity results because it is more sensitive in detecting diverse samples.

\begin{table}[t]
\renewcommand{\arraystretch}{1.25}
\caption{Downstream performance on CommonCrawl subset selection. Higher is better.\\ Accuracy(\%) is reported with subscripts indicating the ranking. (Best in bold. Second best in italics.)}
\label{cc_perf}
\begin{center}
\begin{small}
\begin{tabular}{l|ccccccc|c|c}
\noalign{\vskip 0.5pt}
\noalign{\hrule height 0.75pt}
\noalign{\vskip 0.5pt}
\multirow{2}{*}{Method} & \multicolumn{8}{c|}{50\% Selection Ratio} & {40\% Ratio}\\
\cline{2-9}
 & ARC-e & ARC-c & WG & HS & PIQA & SIQA & CSQA & Mean \textuparrow & Mean \textuparrow\\
\hline
Random & 48.50 & 27.82 & 52.09 & 46.50 & \textbf{69.40} & \textbf{46.30} & \textbf{43.00} & 47.66$ _{(3)}$ & 47.66$ _{(4)}$ \\
DSIR & \textbf{53.50} & 27.73 & \textbf{53.04} & 42.00 & 65.20 & 44.30 & 42.26 & 46.86$ _{(5)}$ & 44.84$ _{(5)}$ \\
Density & 51.70 & 27.99 & 50.75 & \textbf{48.20} & 66.40 & 45.00 & 42.51 & 47.51$ _{(4)}$ & 47.98$ _{(3)}$ \\
DiSF & \textit{52.90} & \textit{28.16} & \textit{52.49} & 46.40 & 67.80 & 44.50 & \textit{42.59} & \textit{47.83}$ _{(2)}$ & \textit{48.35}$ _{(2)}$ \\
Lev (ours) & 52.70 & \textbf{29.35} & 52.33 & \textit{47.10} & \textit{68.30} & \textit{45.10} & 42.34 & \textbf{48.17}$ _{(1)}$ & \textbf{48.91}$ _{(1)}$ \\
\noalign{\vskip 0.5pt}
\noalign{\hrule height 0.75pt}
\noalign{\vskip 0.5pt}
\end{tabular}
\end{small}
\end{center}
\end{table}

\textbf{Downstream performance for CommonCrawl subset selection.} 
We evaluate models trained on CommonCrawl subsets selected by different methods across seven language understanding tasks. Random and DSIR serve as baselines. Table~\ref{cc_perf} shows that Lev consistently achieves the best performance among all diversity-based methods, yielding an average improvement of 1.31\% over DSIR at a 50\% selection ratio. DSIR peaks on ARC-e and WinoGrande but bottoms out on ARC-c and HellaSwag, exhibiting the performance imbalance caused by dimensional collapse. In contrast, Lev achieves the best or second-best performance on ARC-c, HellaSwag, PIQA, and SIQA, demonstrating that diversity enhancement fosters more balanced knowledge extraction from CC web data and yields consistently strong performance across tasks.

\textbf{Downstream performance for StarCoderData subset selection.} 
We show that for code data, where robust quality heuristics are difficult to define, Lev is effective at curating high-quality code datasets in an unsupervised manner. In fact, our findings reveal that diversification itself improves the utility of code data. To validate this, we prepare 60B-token pretraining corpora consisting of 70\% CC data and 30\% StarCoderData. We apply different diversity methods to select code subsets at 40\% and 50\% selection ratios, and combine the selected subsets with two types of CC data: Lev-sampled and quality-filtered CC subsets (DCLM-fastText serves as the quality scorer). This yields four data combinations. We then evaluate BPB of the resulting models on code-related tasks. The results are shown in Figure~\ref{fig:code_all} (full results in Appendix~\ref{table_code_detailed}). We draw the following conclusions:

\begin{figure}[t]
\centering
\begin{subfigure}{0.49\textwidth}
    \centering
    \includegraphics[width=\linewidth]{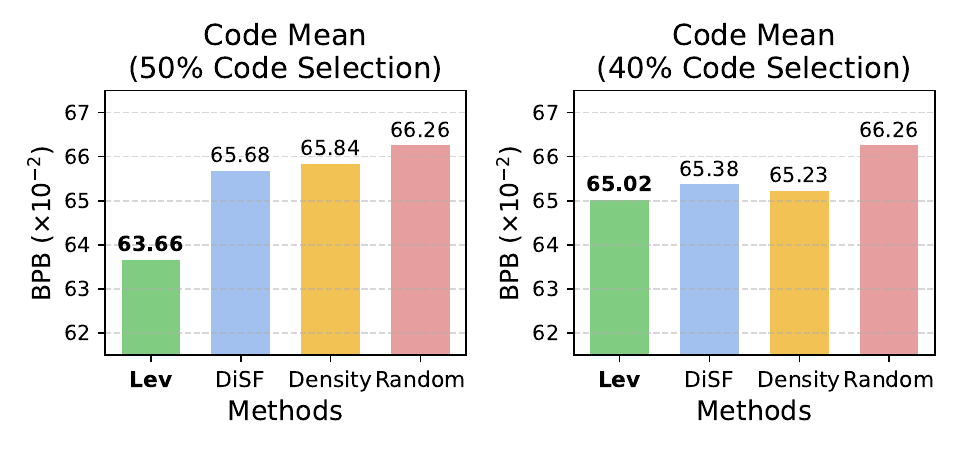}
    \caption{Combined with quality-filtered CC data.}
    \label{code_a}
\end{subfigure}
\hfill
\begin{subfigure}{0.49\textwidth}
    \centering
    \includegraphics[width=\linewidth]{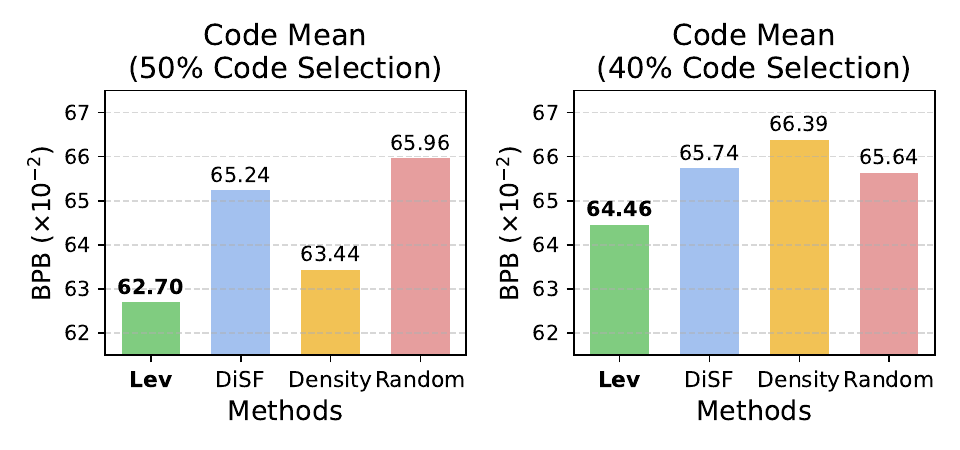}
    \caption{Combined with Lev-sampled CC data.}
    \label{code_bb}
\end{subfigure}
\caption{Code performance of different diversity-based methods. Lower is better.}
\label{fig:code_all}
\end{figure}

\begin{enumerate}[leftmargin=1.4em,itemsep=0pt,topsep=0pt]
    \item \textbf{Diversified code data improves code performance.} Almost all diversity-based methods applied to code data selection improve downstream performance on code tasks, as reflected by lower BPB values relative to the uncurated baseline (Random) in Figure~\ref{fig:code_all}.
    \item \textbf{Lev achieves the best code performance among diversity-based methods.} Lev consistently attains the lowest code BPB across all four settings in Figure~\ref{fig:code_all}. Specifically, at a 50\% code selection ratio with quality-filtered web data (Figure~\ref{code_a}), Lev reduces average code BPB by 3.92\% over Random (from 66.26 to 63.66) and by 3.08\% over DiSF (from 65.68 to 63.66). With Lev-sampled web data (Figure~\ref{code_bb}), the improvement over Random increases to 4.94\% (from 65.96 to 62.70). Results at the 40\% selection ratio corroborate this finding.
\end{enumerate}

\subsubsection{Revisiting dimensional collapse: quality filtering demands diversity}
\label{crossdeg}

\begin{table}[ht]
\renewcommand{\arraystretch}{1.35}
\centering
\vspace{-10pt}
\caption{Comparison between DCLM-fastText (Qual) and Lev under two data recipe.\\ BPB($\times 10^{-2}$) for Code Mean and Accuracy(\%) for Base Mean. Subscripts indicate the ranking.}
\begin{subtable}{0.48\linewidth}
\centering
\caption{Recipe: web only}
\begin{small}
\begin{tabular}{l|c|c}
\noalign{\vskip 0.5pt}
\noalign{\hrule height 0.75pt}
\noalign{\vskip 0.5pt}
Web Data & Code Mean \textdownarrow & Base Mean \textuparrow \\
\hline
Lev (top 40\%) & \textbf{256.43}$ _{(1)}$ & 48.91$ _{(2)}$ \\
Lev (top 50\%) & \textbf{257.35}$ _{(2)}$ & 48.17$ _{(3)}$ \\
Qual (top 40\%) & 266.77$ _{(3)}$ & \textbf{50.21}$ _{(1)}$ \\
Random & 273.22$ _{(4)}$ & 47.66$ _{(4)}$ \\
\noalign{\vskip 0.5pt}
\noalign{\hrule height 0.75pt}
\noalign{\vskip 0.5pt}
\end{tabular}
\end{small}
\label{tab:b}
\end{subtable}
\hfill
\begin{subtable}{0.48\linewidth}
\centering
\caption{Recipe: 70\% web + 30\% Lev curated code data}
\begin{small}
\begin{tabular}{l|c|c}
\noalign{\vskip 0.5pt}
\noalign{\hrule height 0.75pt}
\noalign{\vskip 0.5pt}
Web Data & Code Mean \textdownarrow & Base Mean \textuparrow \\
\hline
Lev (top 40\%) & \textbf{62.48}$ _{(1)}$ & 49.90$ _{(2)}$ \\
Lev (top 50\%) & \textbf{62.70}$ _{(2)}$ & 49.17$ _{(4)}$ \\
Qual (top 40\%) & 63.66$ _{(3)}$ & \textbf{51.53}$ _{(1)}$ \\
Random & 64.46$ _{(4)}$ & 49.28$ _{(3)}$ \\
\noalign{\vskip 0.5pt}
\noalign{\hrule height 0.75pt}
\noalign{\vskip 0.5pt}
\end{tabular}
\end{small}
\label{tab:a}
\end{subtable}
\label{comp2}
\end{table}

Beyond diversity-based methods, quality filtering has long dominated pretraining data curation, owing to its stronger performance on downstream commonsense tasks. However, we overturn this conventional wisdom by uncovering a previously unknown phenomenon: quality filtering can induce cross-domain dimensional collapse, which leads to persistent suboptimality on domain-specific modeling capability. 

\begin{wrapfigure}{r}{0.36\linewidth}
  \centering
  \includegraphics[width=\linewidth]{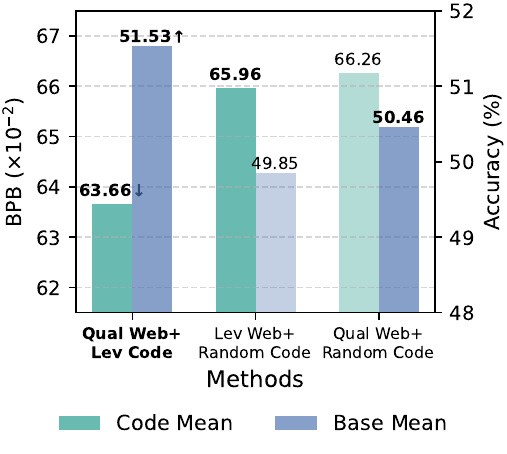}
  \vspace{-10pt}
  \caption{Quality-filtered CC + Lev-selected code data achieves the best performance.}
  \label{mitigate}
  \vspace{-15pt}
\end{wrapfigure}

Using DCLM-fastText as a case study, we find that it likely overweights high-quality text comprehension data relative to code-related data in CC, yielding a text--code imbalance in the filtered data. To demonstrate this, we first pretrain models separately on quality-filtered CC data and Lev-selected CC data. Table~\ref{tab:b} shows that models trained on quality-filtered CC data exhibit significantly higher code BPB values than those trained on Lev-selected data, suggesting that the former contains less code-related content. We then augment both curated CC subsets with dedicated code data sampled from StarCoderData using Lev, and train additional models on these mixtures. Nevertheless, Table~\ref{tab:a} shows that the code performance gap persists. We repeat this experiment with two additional code datasets, and observe consistent results, which are fully reported in Appendix~\ref{infapp}. We also conduct further investigation in Appendix~\ref{appe2}, demonstrating that the inadequate recall of code data is a common limitation among open-source data scorers. This indicates that quality filtering necessitates a broader consideration of diversity.

Applying Lev to code data partially mitigates the code inferiority of quality-filtered web data, resulting in the best overall performance. Figure~\ref{mitigate} shows that combining quality-filtered CC data with Lev-selected code data yields a model that achieves lower code BPB (63.66 vs. 65.96) than the one trained on Lev-selected CC mixed with unfiltered code, while also improving text comprehension accuracy (51.53\% vs. 50.46\%). However, we emphasize that deeply integrating diversity and quality objectives to sample a more holistic CC subset remains a critical direction for future research. Concurrent work has explored effective methods along these lines~\citep{liu2025quadmix,fan2025joint}. Ultimately, our work highlights the indispensable role of dataset diversity in LLM pretraining data curation.

\subsubsection{Lev enables efficient large-scale diversity selection}

While Lev achieves strong diversification, it also proves more efficient than DiSF in both theoretical complexity and empirical runtime, enabling larger-scale data selection.

\begin{table}[ht]
\renewcommand{\arraystretch}{1.25}
\vspace{-12pt}
\caption{Runtime comparison in two real scenarios: web and code selection.}
\label{tab:eff_comp}
\begin{center}
\begin{small}
\begin{tabular}{c|cc|c|cc|c}
\noalign{\vskip 0.5pt}
\noalign{\hrule height 0.75pt}
\noalign{\vskip 0.5pt}
\multirow{2}{*}{Step size} & \multicolumn{3}{c|}{CommonCrawl} & \multicolumn{3}{c}{StarCoderData}\\
\cline{2-7}
 & Lev & DiSF & Speedup & Lev & DiSF & Speedup \\
\hline
1 & 3.03h & 217.28h & 72$\times$ & 1.03h & 67.45h & 65$\times$ \\
2 & 1.67h & 108.48h & 65$\times$ & 34min & 33.68h & 59$\times$ \\
4 & 55min & 56.12h & 61$\times$ & 19min & 17.05h & 54$\times$ \\
8 & 35min & 26.88h & 46$\times$ & 11min & 8.58h & 44$\times$ \\
\noalign{\vskip 0.5pt}
\noalign{\hrule height 0.75pt}
\noalign{\vskip 0.5pt}
\end{tabular}
\end{small}
\end{center}
\vspace{-10pt}
\end{table}

\begin{wrapfigure}{r}{0.35\linewidth}
  \centering
  \includegraphics[width=\linewidth]{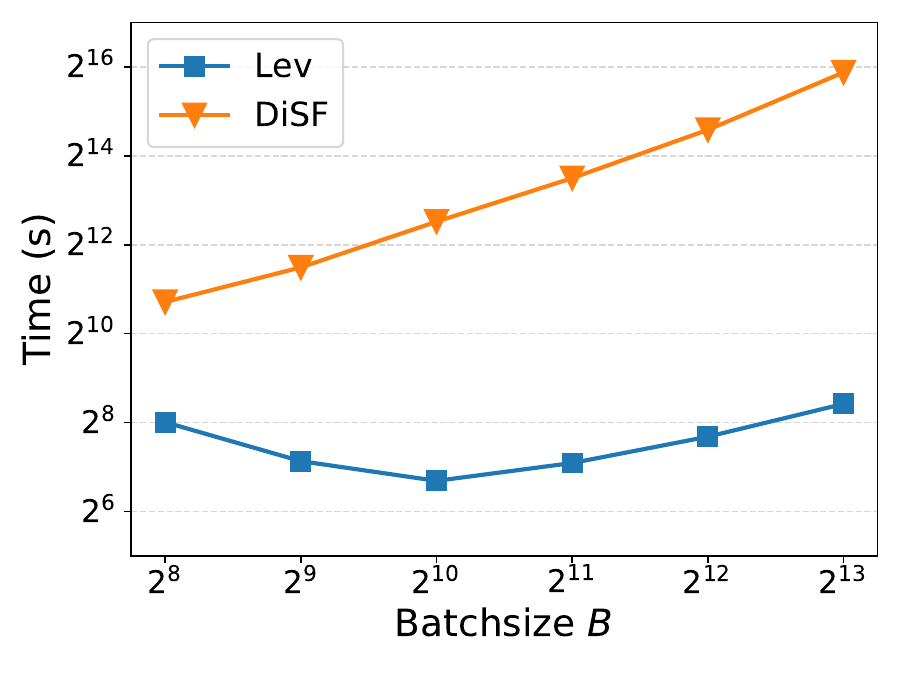}
  \caption{Runtime comparison between Lev and DiSF with respect to batch size $B$.}
  \label{batchsize}
  \vspace{-10pt}
\end{wrapfigure}

\textbf{Theoretical Analysis.} 
For Lev, Algorithm~\ref{alg:leverage_sampling} processes the corpus in $M = D/B$ batches. Within each batch, it performs $m = \alpha B/b$ iterations. The dominant computational cost per iteration is $O(Bd^2 + d^3)$ for computing leverage scores, comprising: (1) $O(d^3)$ for Cholesky decomposition, (2) $O(Bd^2)$ for batch-wise leverage score computation, and (3) $O(bd^2)$ for Gram matrix updates. To conclude, Lev has an overall time complexity of $O(Mm(Bd^2 + d^3)) = O(D B d^2)$ when $B > d$. 

In contrast, DiSF's equivalent score function is $f(z_i) = e^{-\|C(\mathcal{S} \cup \{z_i\})\|_F}$, where $C(\mathcal{S})$ is the normalized feature covariance matrix with respect to the selected data \(\mathcal{S}\). This involves $O(\alpha B^2d^2)$ complexity per iteration, as computing $f(z_i)$ for $B$ candidates requires $O(\alpha Bd^2)$ operations each. Consequently, DiSF's overall complexity is $O(Mm\alpha B^2d^2) = O(D B^2 d^2)$. 
We provide a more elaborated analysis in Appendix \ref{detailed_time}.

\textbf{Runtime Comparison.} 
The former analysis indicates that Lev eliminates one order of $B$ compared to DiSF. To validate this, we measure the runtime of data selection with respect to the batch size $B$. We select 40\% of the documents from a CommonCrawl subset using 8 NVIDIA RTX 5090 GPUs that process 8 files in parallel. Figure~\ref{batchsize} shows that Lev is faster than DiSF by several orders of magnitude overall, and the gap widens as $B$ grows. We then evaluate both algorithms in two realistic settings: CommonCrawl and StarCoderData subset selection. We adopt a batch size of 1024, which is the best-practice setting for DiSF in~\citep{fan2025combatting}, and measure the runtimes across different step sizes. The greedy selection for submodular function optimization in DiSF corresponds to the step size of 1. 
Table~\ref{tab:eff_comp} shows that with a step size of 1, Lev processes a CommonCrawl dump in only about 3.03 hours, whereas DiSF requires 217.28 hours, \textit{confirming a $72\times$ speedup}. For Lev, Figure~\ref{batchsize} exhibits an anomalous monotonic decrease when $B$ is small. This occurs because leverage sampling is extremely efficient in this regime, shifting the bottleneck to other parts of the pipeline. Overall, Lev exhibits superior scalability compared to DiSF.

\subsection{Ablation Study}
\label{subsec:ablation}

\begin{figure}[t]
\centering

\begin{subfigure}{0.32\textwidth}
    \centering
    \includegraphics[width=\linewidth]{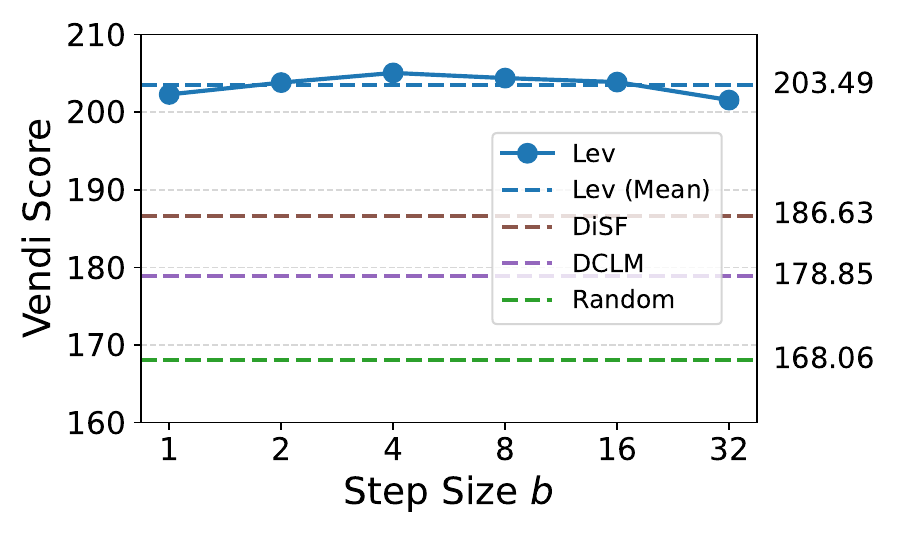}
    \caption{Vendi score vs. Step size.}
    \label{sensea}
\end{subfigure}
\hfill
\begin{subfigure}{0.32\textwidth}
    \centering
    \includegraphics[width=\linewidth]{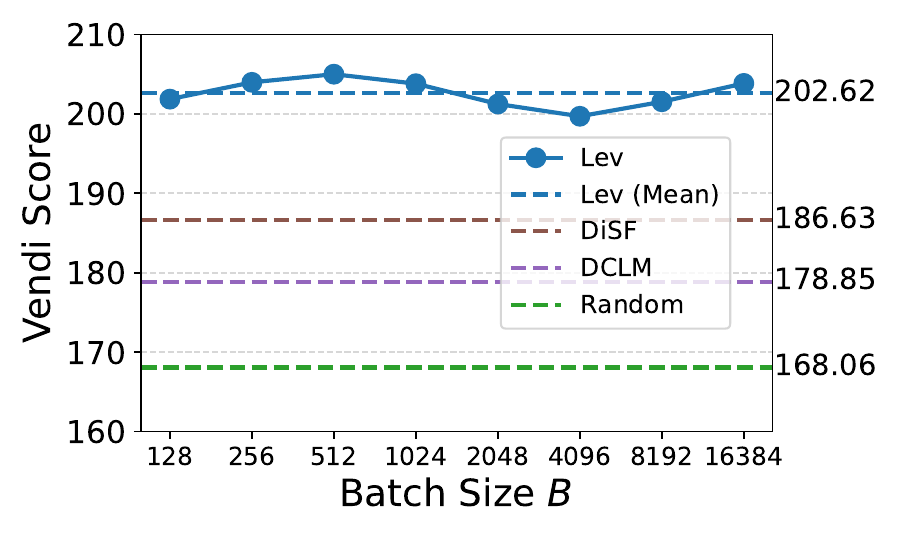}
    \caption{Vendi score vs. Batch size.}
    \label{senseb}
\end{subfigure}
\hfill
\begin{subfigure}{0.32\textwidth}
    \centering
    \includegraphics[width=\linewidth]{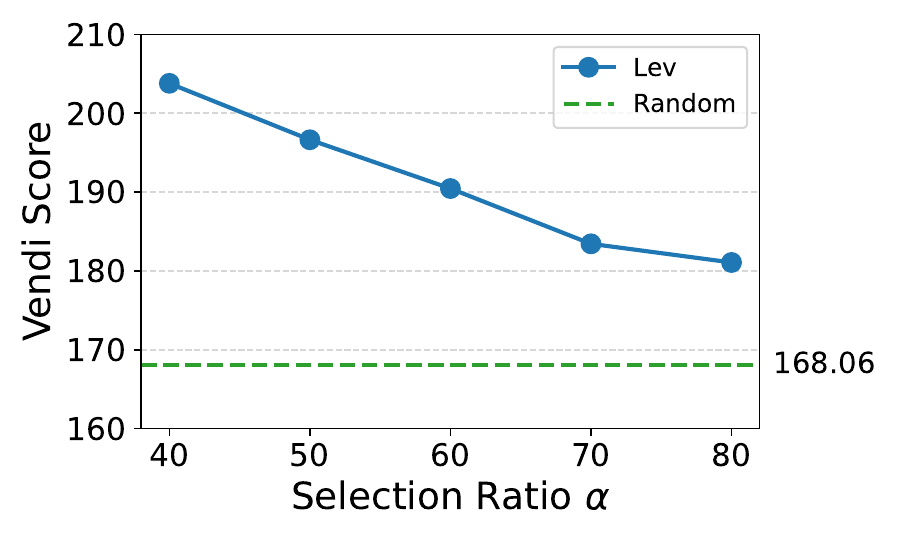}
    \caption{Vendi score vs. Selection ratio.}
    \label{sensec}
\end{subfigure}

\vspace{0.5em}

\begin{subfigure}{0.32\textwidth}
    \centering
    \includegraphics[width=\linewidth]{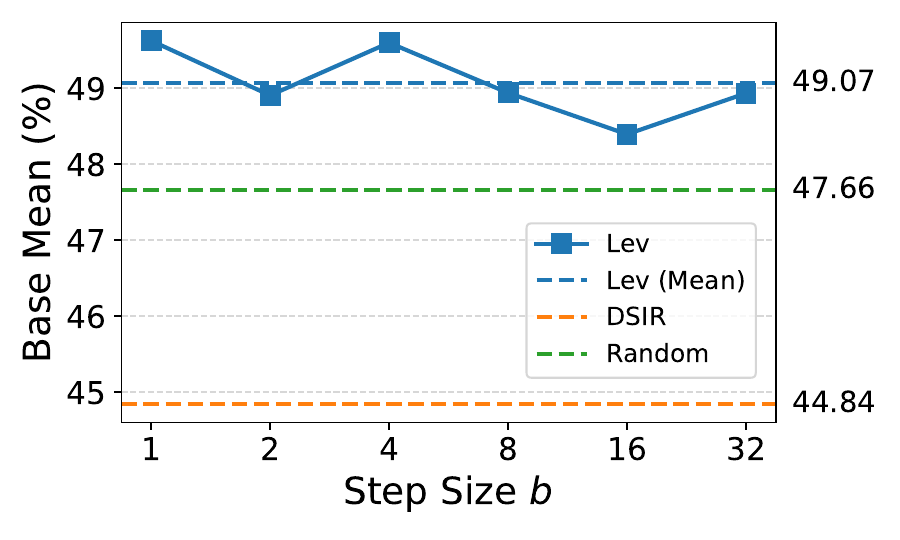}
    \caption{Base mean vs. Step size.}
    \label{sensed}
\end{subfigure}
\hfill
\begin{subfigure}{0.32\textwidth}
    \centering
    \includegraphics[width=\linewidth]{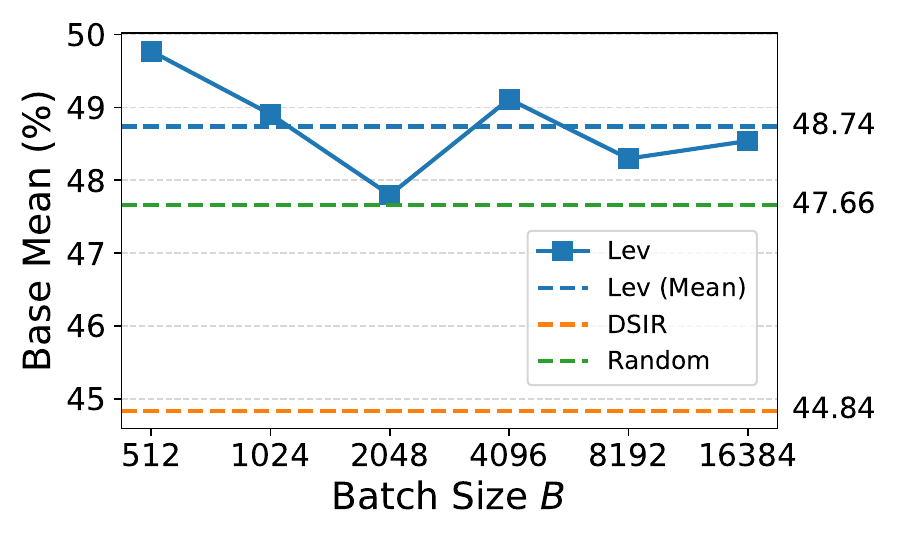}
    \caption{Base mean vs. Batch size.}
    \label{sensee}
\end{subfigure}
\hfill
\begin{subfigure}{0.32\textwidth}
    \centering
    \includegraphics[width=\linewidth]{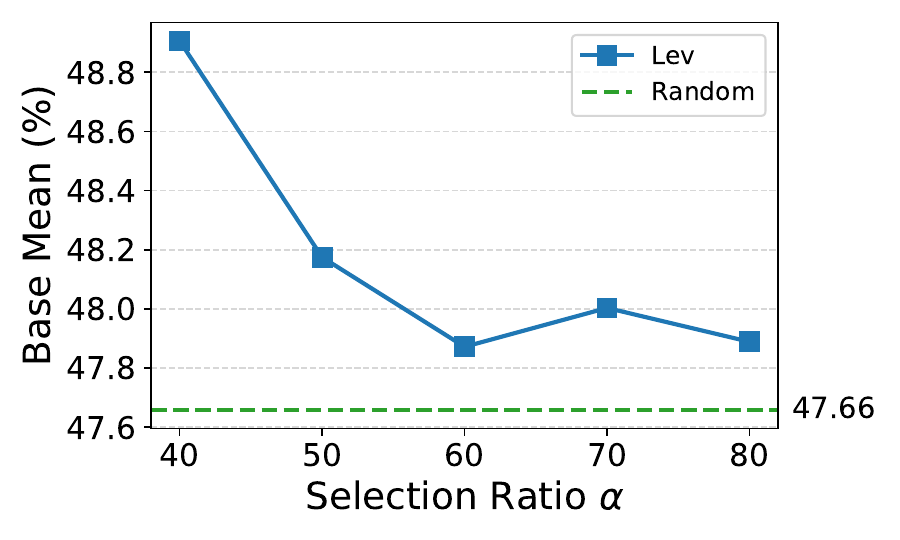}
    \caption{Base mean vs. Selection ratio.}
    \label{sensef}
\end{subfigure}

\caption{Sensitivity to hyperparameters.}
\label{sense}
\end{figure}

We investigate the sensitivity of Algorithm~\ref{alg:leverage_sampling} to key hyperparameters: step size, batch size, and selection ratio. To isolate effects from cross-domain interference, we consider only CC web data at a 40\% selection ratio for evaluating step size and batch size. We show that Lev is robust to variations in step size and batch size, and remains effective across a range of selection ratios for both CC and code.

\begin{wrapfigure}{r}{0.4\linewidth}
  \vspace{-5pt}
  \centering
  \includegraphics[width=\linewidth]{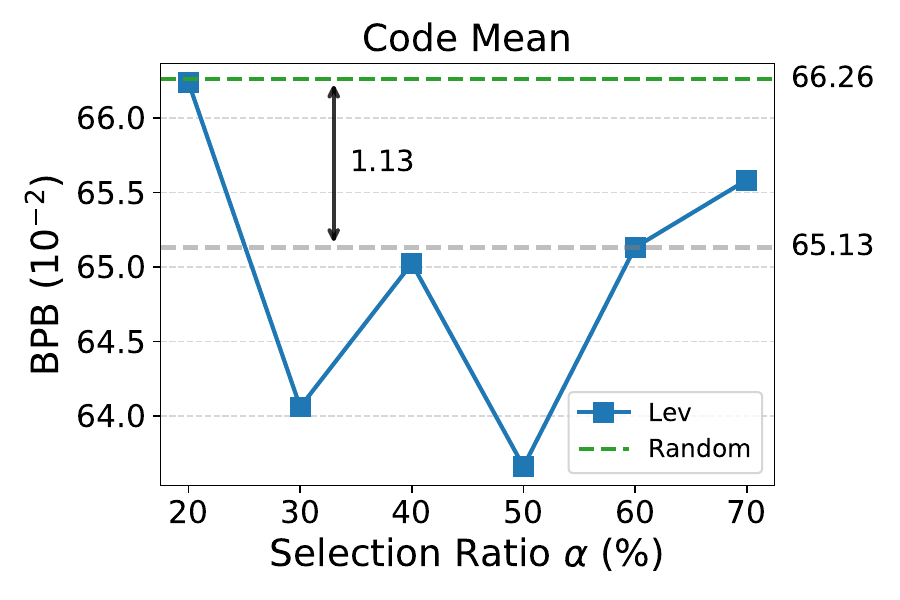}
  \caption{Code mean vs. Code selection ratio. Lower is better.}
  \vspace{-5pt}
  \label{code_ratio}
\end{wrapfigure}

\textbf{Step size \(b\) is insensitive.} A larger step size implies that more samples are selected in a single update. Intuitively, this might appear to compromise diversity. Contrary to this intuition, Figure~\ref{sensea} suggests otherwise. Because the raw data is shuffled, the $b$ samples selected at each step are highly likely to be distinct. Thus, a larger step size preserves data diversity. This is further corroborated by Figure~\ref{sensed}, which shows that downstream scores fluctuate around the mean without exhibiting significant monotonicity, confirming that step size is an insensitive parameter in Algorithm~\ref{alg:leverage_sampling}. Empirically, a larger step size is preferred because of its substantial computational speedups (Table~\ref{tab:eff_comp}).

\textbf{Batch size \(B\) is insensitive.} Intuitively, a larger batch size might appear to ensure higher data diversity, as the reference data pool $X$ covers more selected samples during each iteration. However, Figures~\ref{senseb} and~\ref{sensee} contradict this expectation. Both diversity and downstream scores fluctuate around their respective means without exhibiting a significant monotonic trend with respect to $B$, indicating that the batch size is also an insensitive parameter in our algorithm. Since a larger $B$ incurs greater computational overhead without yielding corresponding diversity gains, we select $B=1024$ to balance efficiency and performance. As shown in Table~\ref{div_metric}, this setting already exhibits a high degree of diversity and achieves a considerable advantage over other baselines.

\textbf{Selection ratio \(\alpha\).} Table~\ref{sensec} shows that web data diversity decreases as the selection ratio increases, leading to a corresponding drop in downstream performance in Table~\ref{sensef}. However, Lev consistently outperforms the Random baseline even when \(\alpha\) exceeds 50\%, underscoring the effectiveness of data diversification. Particularly for code data selection shown in Figure~\ref{code_ratio}, we observe that Lev is consistently effective within a wide range of selection ratios (e.g., 30\% to 60\%), yielding a stable reduction of over $1.13\times 10^{-2}$ in bits-per-byte compared to the baseline.

\section{Conclusion}
\label{sec:conclusion}

We present Leverage Score Sampling (Lev), a scalable diversity-oriented method that efficiently expands the representation subspace by iteratively selecting high-leverage samples. Empirically, Lev outperforms existing diversification baselines across diversity metrics, downstream task performance, and runtime efficiency while simultaneously achieving a $72\times$ speedup over DiSF. Our analysis further reveals a collateral {cross-domain dimensional collapse} induced by quality filtering, exemplified by the systematic exclusion of code-related content from CommonCrawl. These findings underscore the essential complementary role of diversity-aware selection to quality filtering and inspire tighter integration of diversity and quality objectives in building more robust data curation pipelines.

\begin{ack}
    This work is sponsored by the National Natural Science Foundation of China (No. 22494644, No. 62561160152). Any opinions, findings, conclusions, or recommendations expressed in this material are those of the author(s) and do not necessarily reflect the views of the funding agencies. This work was done during the first author's internship at Wizard Intelligence Learning Lab.

\end{ack}


\bibliographystyle{plain}
\bibliography{references}

@inproceedings{stasaski2022semantic,
  title={Semantic diversity in dialogue with natural language inference},
  author={Stasaski, Katherine and Hearst, Marti A},
  booktitle={Proceedings of the 2022 Conference of the North American Chapter of the Association for Computational Linguistics: Human Language Technologies},
  pages={85--98},
  year={2022}
}

@InProceedings{pasarkar2023cousins,
  title = 	 {Cousins Of The {V}endi Score: A Family Of Similarity-Based Diversity Metrics For Science And Machine Learning},
  author =       {Pasarkar, Amey P. and Dieng, Adji Bousso},
  booktitle = 	 {Proceedings of The 27th International Conference on Artificial Intelligence and Statistics},
  year = 	 {2024},
}

@inproceedings{du2019boosting,
  title={Boosting dialog response generation},
  author={Du, Wenchao and Black, Alan W},
  booktitle={Proceedings of the 57th Annual Meeting of the Association for Computational Linguistics},
  pages={38--43},
  year={2019}
}

@article{zhang2025qwen3,
  title={Qwen3 embedding: Advancing text embedding and reranking through foundation models},
  author={Zhang, Yanzhao and Li, Mingxin and Long, Dingkun and Zhang, Xin and Lin, Huan and Yang, Baosong and Xie, Pengjun and Yang, An and Liu, Dayiheng and Lin, Junyang and others},
  journal={arXiv preprint arXiv:2506.05176},
  year={2025}
}

@inproceedings{
  hao2025bliss,
  title={{BLISS}: A Lightweight Bilevel Influence Scoring Method for Data Selection in Language Model Pretraining},
  author={Jie Hao and Rui Yu and Wei Zhang and Huixia Wang and Jie Xu and Mingrui Liu},
  booktitle={Forty-third International Conference on Machine Learning},
  year={2026}
}

@inproceedings{fan2025combatting,
  title={Combatting dimensional collapse in LLM pre-training data via submodular file selection},
  author={Fan, Ziqing and Du, Siyuan and Hu, Shengchao and Wang, Pingjie and Shen, Li and Zhang, Ya and Tao, Dacheng and Wang, Yanfeng},
  booktitle={The Thirteenth International Conference on Learning Representations},
  year={2025}
}

@article{xie2023data,
  title={Data selection for language models via importance resampling},
  author={Xie, Sang Michael and Santurkar, Shibani and Ma, Tengyu and Liang, Percy S},
  journal={Advances in Neural Information Processing Systems},
  volume={36},
  pages={34201--34227},
  year={2023}
}

@inproceedings{
  gu2024data,
  title={Data Selection via Optimal Control for Language Models},
  author={Yuxian Gu and Li Dong and Hongning Wang and Yaru Hao and Qingxiu Dong and Furu Wei and Minlie Huang},
  booktitle={The Thirteenth International Conference on Learning Representations},
  year={2025},
  url={https://openreview.net/forum?id=dhAL5fy8wS}
}

@article{li2024datacomp,
  title={Datacomp-lm: In search of the next generation of training sets for language models},
  author={Li, Jeffrey and Fang, Alex and Smyrnis, Georgios and Ivgi, Maor and Jordan, Matt and Gadre, Samir Yitzhak and Bansal, Hritik and Guha, Etash and Keh, Sedrick Scott and Arora, Kushal and others},
  journal={Advances in Neural Information Processing Systems},
  volume={37},
  pages={14200--14282},
  year={2024}
}

@inproceedings{peng2025dataman,
  title={Dataman: Data manager for pre-training large language models},
  author={Peng, Ru and Yang, Kexin and Zeng, Yawen and Lin, Junyang and Liu, Dayiheng and Zhao, Junbo},
  booktitle={International Conference on Learning Representations},
  volume={2025},
  pages={9747--9799},
  year={2025}
}

@inproceedings{wang2024diversity,
  title={Diversity Measurement and Subset Selection for Instruction Tuning Datasets},
  author={Peiqi Wang and Yikang Shen and Zhen Guo and Matthew Stallone and Yoon Kim and Polina Golland and Rameswar Panda},
  booktitle={ICLR 2025 Workshop on Navigating and Addressing Data Problems for Foundation Models},
  year={2025}
}

@article{xie2023doremi,
  title={Doremi: Optimizing data mixtures speeds up language model pretraining},
  author={Xie, Sang Michael and Pham, Hieu and Dong, Xuanyi and Du, Nan and Liu, Hanxiao and Lu, Yifeng and Liang, Percy S and Le, Quoc V and Ma, Tengyu and Yu, Adams Wei},
  journal={Advances in Neural Information Processing Systems},
  volume={36},
  pages={69798--69818},
  year={2023}
}

@inproceedings{
  fan2025joint,
  title={Joint Selection for Large-Scale Pre-Training Data via Policy Gradient-based Mask Learning},
  author={Ziqing Fan and Yuqiao Xian and Yan Sun and Ke Shen and Li Shen},
  booktitle={The Fourteenth International Conference on Learning Representations},
  year={2026},
  url={https://openreview.net/forum?id=fs2uDib85s}
}

@article{yu2024mates,
  title={Mates: Model-aware data selection for efficient pretraining with data influence models},
  author={Yu, Zichun and Das, Spandan and Xiong, Chenyan},
  journal={Advances in Neural Information Processing Systems},
  volume={37},
  pages={108735--108759},
  year={2024}
}

@inproceedings{yang2025measuring,
  title={Measuring data diversity for instruction tuning: A systematic analysis and a reliable metric},
  author={Yang, Yuming and Nan, Yang and Ye, Junjie and Dou, Shihan and Wang, Xiao and Li, Shuo and Lv, Huijie and Gui, Tao and Zhang, Qi and Huang, Xuan-Jing},
  booktitle={Proceedings of the 63rd Annual Meeting of the Association for Computational Linguistics (Volume 1: Long Papers)},
  pages={18530--18549},
  year={2025}
}

@InProceedings{wettig2025organize,
  title = 	 {Organize the Web: Constructing Domains Enhances Pre-Training Data Curation},
  author =       {Wettig, Alexander and Lo, Kyle and Min, Sewon and Hajishirzi, Hannaneh and Chen, Danqi and Soldaini, Luca},
  booktitle = 	 {Proceedings of the 42nd International Conference on Machine Learning},
  year = 	 {2025}
}

@inproceedings{zhang2025preference,
  title={Preference curriculum: Llms should always be pretrained on their preferred data},
  author={Zhang, Xuemiao and Liangyu, Xu and Duan, Feiyu and Zhou, Yongwei and Wang, Sirui and Weng, Rongxiang and Wang, Jingang and Cai, Xunliang},
  booktitle={Findings of the Association for Computational Linguistics: ACL 2025},
  pages={21181--21198},
  year={2025}
}

@article{liu2025quadmix,
  title={Quadmix: Quality-diversity balanced data selection for efficient llm pretraining},
  author={Liu, Fengze and Zhou, Weidong and Liu, Binbin and Yu, Zhimiao and Zhang, Yifan and Lin, Haobin and Yu, Yifeng and Zhang, Bingni and Zhou, Xiaohuan and Wang, Taifeng and others},
  journal={arXiv preprint arXiv:2504.16511},
  year={2025}
}

@inproceedings{nguyen2025rewire,
  title={Recycling the Web: A Method to Enhance Pre-training Data Quality and Quantity for Language Models},
  author={Thao Nguyen and Yang Li and Olga Golovneva and Luke Zettlemoyer and Sewoong Oh and Ludwig Schmidt and Xian Li},
  booktitle={Second Conference on Language Modeling},
  year={2025}
}

@inproceedings{yu2025repro,
  title={RePro: Training Language Models to Faithfully Recycle the Web for Pretraining},
  author={Zichun Yu and Chenyan Xiong},
  booktitle={Forty-third International Conference on Machine Learning},
  year={2026}
}

@article{penedo2024fineweb,
  title={The fineweb datasets: Decanting the web for the finest text data at scale},
  author={Penedo, Guilherme and Kydl{\'\i}{\v{c}}ek, Hynek and Lozhkov, Anton and Mitchell, Margaret and Raffel, Colin A and Von Werra, Leandro and Wolf, Thomas and others},
  journal={Advances in Neural Information Processing Systems},
  volume={37},
  pages={30811--30849},
  year={2024}
}

@inproceedings{gu2025olmes,
  title={Olmes: A standard for language model evaluations},
  author={Gu, Yuling and Tafjord, Oyvind and Kuehl, Bailey and Haddad, Dany and Dodge, Jesse and Hajishirzi, Hannaneh},
  booktitle={Findings of the Association for Computational Linguistics: NAACL 2025},
  pages={5005--5033},
  year={2025}
}

@inproceedings{zellers2019hellaswag,
  title={Hellaswag: Can a machine really finish your sentence?},
  author={Zellers, Rowan and Holtzman, Ari and Bisk, Yonatan and Farhadi, Ali and Choi, Yejin},
  booktitle={Proceedings of the 57th annual meeting of the association for computational linguistics},
  pages={4791--4800},
  year={2019}
}

@inproceedings{bisk2020piqa,
  title={Piqa: Reasoning about physical commonsense in natural language},
  author={Bisk, Yonatan and Zellers, Rowan and Gao, Jianfeng and Choi, Yejin and others},
  booktitle={Proceedings of the AAAI conference on artificial intelligence},
  volume={34},
  number={05},
  pages={7432--7439},
  year={2020}
}

@inproceedings{sakaguchi2020winogrande,
  title={Winogrande: An adversarial winograd schema challenge at scale},
  author={Sakaguchi, Keisuke and Le Bras, Ronan and Bhagavatula, Chandra and Choi, Yejin},
  booktitle={Proceedings of the AAAI Conference on Artificial Intelligence},
  volume={34},
  number={05},
  pages={8732--8740},
  year={2020}
}

@inproceedings{talmor2019commonsenseqa,
  title={Commonsenseqa: A question answering challenge targeting commonsense knowledge},
  author={Talmor, Alon and Herzig, Jonathan and Lourie, Nicholas and Berant, Jonathan},
  booktitle={Proceedings of the 2019 Conference of the North American Chapter of the Association for Computational Linguistics: Human Language Technologies, Volume 1 (Long and Short Papers)},
  pages={4149--4158},
  year={2019}
}

@inproceedings{sap2019socialiqa,
  title={Social IQa: Commonsense reasoning about social interactions},
  author={Sap, Maarten and Rashkin, Hannah and Chen, Derek and Le Bras, Ronan and Choi, Yejin},
  booktitle={Proceedings of the 2019 conference on empirical methods in natural language processing and the 9th international joint conference on natural language processing (EMNLP-IJCNLP)},
  pages={4463--4473},
  year={2019}
}

@article{clark2018think,
  title={Think you have solved question answering? try arc, the ai2 reasoning challenge},
  author={Clark, Peter and Cowhey, Isaac and Etzioni, Oren and Khot, Tushar and Sabharwal, Ashish and Schoenick, Carissa and Tafjord, Oyvind},
  journal={arXiv preprint arXiv:1803.05457},
  year={2018}
}

@article{chen2021evaluating,
  title={Evaluating large language models trained on code},
  author={Chen, Mark},
  journal={arXiv preprint arXiv:2107.03374},
  year={2021}
}

@article{penedo2023refinedweb,
  title={The refinedweb dataset for falcon llm: Outperforming curated corpora with web data only},
  author={Penedo, Guilherme and Malartic, Quentin and Hesslow, Daniel and Cojocaru, Ruxandra and Alobeidli, Hamza and Cappelli, Alessandro and Pannier, Baptiste and Almazrouei, Ebtesam and Launay, Julien},
  journal={Advances in Neural Information Processing Systems},
  volume={36},
  pages={79155--79172},
  year={2023}
}

@article{
  li2023starcoder,
  title={StarCoder: may the source be with you!},
  author={Li, Raymond and Allal, Loubna Ben and Zi, Yangtian and Muennighoff, Niklas and Kocetkov, Denis and Mou, Chenghao and Marone, Marc and Akiki, Christopher and Li, Jia and Chim, Jenny and others},
  journal={Transactions on Machine Learning Research},
  issn={2835-8856},
  year={2023}
}

@misc{cassano2022multiplescalableextensibleapproach,
      title={MultiPL-E: A Scalable and Extensible Approach to Benchmarking Neural Code Generation}, 
      author={Federico Cassano and John Gouwar and Daniel Nguyen and Sydney Nguyen and Luna Phipps-Costin and Donald Pinckney and Ming-Ho Yee and Yangtian Zi and Carolyn Jane Anderson and Molly Q Feldman and Arjun Guha and Michael Greenberg and Abhinav Jangda},
      year={2022},
      eprint={2208.08227},
      archivePrefix={arXiv},
      primaryClass={cs.LG},
      url={https://arxiv.org/abs/2208.08227}, 
}

@misc{grattafiori2024llama3herdmodels,
      title={The Llama 3 Herd of Models}, 
      author={Aaron Grattafiori and Abhimanyu Dubey and Abhinav Jauhri and Abhinav Pandey and Abhishek Kadian and Ahmad Al-Dahle, et al.},
      year={2024},
      eprint={2407.21783},
      archivePrefix={arXiv},
      primaryClass={cs.AI},
      url={https://arxiv.org/abs/2407.21783}, 
}

@article{raffel2020exploring,
  title={Exploring the Limits of Transfer Learning with a Unified Text-to-Text Transformer},
  author={Raffel, Colin and Shazeer, Noam and Roberts, Adam and Lee, Katherine and Narang, Sharan and Matena, Michael and Zhou, Yanqi and Li, Wei and Liu, Peter J.},
  journal={Journal of Machine Learning Research},
  volume={21},
  number={140},
  pages={1--67},
  year={2020}
}

@article{weber2024redpajama,
  title={RedPajama: an Open Dataset for Training Large Language Models},
  author={Weber, Maurice and Fu, Daniel Y. and Anthony, Quentin and Oren, Yonatan and Adams, Shane and Alexandrov, Anton and Lyu, Xiaozhong and Nguyen, Huu and Yao, Xiaozhe and Adams, Virginia and Athiwaratkun, Ben and Chalamala, Rahul and Chen, Kezhen and Ryabinin, Max and Dao, Tri and Liang, Percy and R{\'e}, Christopher and Rish, Irina and Zhang, Ce},
  journal={Advances in Neural Information Processing Systems},
  volume={37},
  year={2024}
}

@inproceedings{soldaini2024dolma,
  title={Dolma: an Open Corpus of Three Trillion Tokens for Language Model Pretraining Research},
  author={Luca Soldaini and Rodney Kinney and Akshita Bhagia, et al.},
  booktitle={Proceedings of the 62nd Annual Meeting of the Association for Computational Linguistics (Volume 1: Long Papers)},
  pages={15725--15788},
  address={Bangkok, Thailand},
  publisher={Association for Computational Linguistics},
  doi={10.18653/v1/2024.acl-long.840},
  year={2024}
}

@inproceedings{sachdeva2026askllm,
  title={How to train data-efficient {LLM}s},
  author={Noveen Sachdeva and Benjamin Coleman and Wang-Cheng Kang and Jianmo Ni and Lichan Hong and Ed H. Chi and James Caverlee and Julian McAuley and Derek Zhiyuan Cheng},
  booktitle={The Fourteenth International Conference on Learning Representations},
  year={2026},
  url={https://openreview.net/forum?id=yKUbw7q1IA}
}

@InProceedings{wettig2024qurating,
  title = 	 {{Q}u{R}ating: Selecting High-Quality Data for Training Language Models},
  author =       {Wettig, Alexander and Gupta, Aatmik and Malik, Saumya and Chen, Danqi},
  booktitle = 	 {Proceedings of the 41st International Conference on Machine Learning},
  year = 	 {2024}
}

@article{dai2025dataefficacy,
  title={Data Efficacy for Language Model Training},
  author={Dai, Yalun and Huang, Yangyu and Zhang, Xin and Wu, Wenshan and Li, Chong and Lu, Wenhui and Cao, Shijie and Dong, Li and Li, Scarlett},
  journal={arXiv preprint arXiv:2506.21545},
  year={2025}
}

@inproceedings{yu2025groupmates,
  title={Group-Level Data Selection for Efficient Pretraining},
  author={Yu, Zichun and Peng, Fei and Lei, Jie and Overwijk, Arnold and Yih, Wen-tau and Xiong, Chenyan},
  booktitle={Advances in Neural Information Processing Systems},
  year={2025},
  url={https://openreview.net/forum?id=uX4dyc7Z5Z}
}

@inproceedings{wang2025datashapley,
  title={Data Shapley in One Training Run},
  author={Wang, Jiachen T. and Mittal, Prateek and Song, Dawn and Jia, Ruoxi},
  booktitle={The Thirteenth International Conference on Learning Representations},
  year={2025},
  url={https://openreview.net/forum?id=HD6bWcj87Y}
}

@inproceedings{su2024nemotroncc,
  title={Nemotron-cc: Transforming common crawl into a refined long-horizon pretraining dataset},
  author={Su, Dan and Kong, Kezhi and Lin, Ying and Jennings, Joseph and Norick, Brandon and Kliegl, Markus and Patwary, Mostofa and Shoeybi, Mohammad and Catanzaro, Bryan},
  booktitle={Proceedings of the 63rd Annual Meeting of the Association for Computational Linguistics (Volume 1: Long Papers)},
  pages={2459--2475},
  year={2025}
}

@inproceedings{bukharin2024diversity,
  title={Data Diversity Matters for Robust Instruction Tuning},
  author={Bukharin, Alexander and Li, Shiyang and Wang, Zhengyang and Yang, Jingfeng and Yin, Bing and Li, Xian and Zhang, Chao and Zhao, Tuo and Jiang, Haoming},
  booktitle={Findings of the Association for Computational Linguistics: EMNLP 2024},
  pages={3411--3425},
  address={Miami, Florida, USA},
  publisher={Association for Computational Linguistics},
  doi={10.18653/v1/2024.findings-emnlp.195},
  url={https://aclanthology.org/2024.findings-emnlp.195/},
  year={2024}
}

@inproceedings{chen2025mig,
  title={MIG: Automatic Data Selection for Instruction Tuning by Maximizing Information Gain in Semantic Space},
  author={Chen, Yicheng and Li, Yining and Hu, Kai and Ma, Zerun and HaochenYe, HaochenYe and Chen, Kai},
  booktitle={Findings of the Association for Computational Linguistics: ACL 2025},
  pages={9902--9915},
  address={Vienna, Austria},
  publisher={Association for Computational Linguistics},
  doi={10.18653/v1/2025.findings-acl.515},
  url={https://aclanthology.org/2025.findings-acl.515/},
  year={2025}
}

@inproceedings{ge2024clustering,
  title={Clustering and Ranking: Diversity-preserved Instruction Selection through Expert-aligned Quality Estimation},
  author={Ge, Yuan and Liu, Yilun and Hu, Chi and Meng, Weibin and Tao, Shimin and Zhao, Xiaofeng and Xia, Mahong and Li, Zhang and Chen, Boxing and Yang, Hao and Li, Bei and Xiao, Tong and Zhu, JingBo},
  booktitle={Proceedings of the 2024 Conference on Empirical Methods in Natural Language Processing},
  pages={464--478},
  address={Miami, Florida, USA},
  publisher={Association for Computational Linguistics},
  doi={10.18653/v1/2024.emnlp-main.28},
  url={https://aclanthology.org/2024.emnlp-main.28/},
  year={2024}
}

@inproceedings{zhang2025d3, 
  author = {Zhang, Jia and Zhang, Chen-Xi and Liu, Yao and Jin, Yi-Xuan and Yang, Xiao-Wen and Zheng, Bo and Liu, Yi and Guo, Lan-Zhe}, 
  title = {D3: diversity, difficulty, and dependability-aware data selection for sample-efficient llm instruction tuning}, 
  year = {2025}, 
  isbn = {978-1-956792-06-5}, 
  booktitle = {Proceedings of the Thirty-Fourth International Joint Conference on Artificial Intelligence}, 
  articleno = {928}, 
  numpages = {9}, 
  location = {Montreal, Canada}, 
  series = {IJCAI '25} 
}

@misc{pandia2026coverage,
  title={Rethinking Data Selection: The Importance of Coverage over Difficulty in Generative Fine-Tuning},
  author={Lalchand Pandia and Kanishka Misra and Allyson Ettinger},
  year={2026},
  url={https://openreview.net/forum?id=g1DiK2Yi4j}
}

@inproceedings{chen2026dqo,
  title={Post-training large language models for diverse high-quality responses},
  author={Chen, Yilei and Chakraborty, Souradip and Wolf, Lorenz and Paschalidis, Ioannis and Pacchiano, Aldo},
  booktitle={International Conference on Learning Representations},
  volume={2026},
  pages={127946--127975},
  year={2026}
}

@inproceedings{zeng2024diversified,
  title={On Diversified Preferences of Large Language Model Alignment},
  author={Zeng, Dun and Dai, Yong and Cheng, Pengyu and Wang, Longyue and Hu, Tianhao and Chen, Wanshun and Du, Nan and Xu, Zenglin},
  booktitle={Findings of the Association for Computational Linguistics: EMNLP 2024},
  pages={9194--9210},
  address={Miami, Florida, USA},
  publisher={Association for Computational Linguistics},
  doi={10.18653/v1/2024.findings-emnlp.538},
  url={https://aclanthology.org/2024.findings-emnlp.538/},
  year={2024}
}

@article{zhang2025beyondsingle,
  title={Beyond Single: A Data Selection Principle for LLM Alignment via Fine-Grained Preference Signals},
  author={Zhang, Jia and Liu, Yao and Zhang, Chen-Xi and Liu, Yi and Jin, Yi-Xuan and Guo, Lan-Zhe and Li, Yu-Feng},
  journal={arXiv preprint arXiv:2508.07638},
  year={2025},
  doi={10.48550/arXiv.2508.07638}
}

@inproceedings{shen2025seal,
  title={SEAL: Safety-enhanced Aligned LLM Fine-tuning via Bilevel Data Selection},
  author={Shen, Han and Chen, Pin-Yu and Das, Payel and Chen, Tianyi},
  booktitle={The Thirteenth International Conference on Learning Representations},
  year={2025},
  url={https://openreview.net/forum?id=VHguhvcoM5}
}

@inproceedings{jha2025rlguided,
  title={{RL}-Guided Data Selection for Language Model Finetuning},
  author={Animesh Jha and Ananjan Nandi and Harshit Gupta},
  booktitle={NeurIPS 2025 Workshop: Reliable ML from Unreliable Data},
  year={2025},
  url={https://openreview.net/forum?id=YfMIkHYxP0}
}

@inproceedings{wang2025reschedule,
  title={Scheduling your llm reinforcement learning with reasoning trees},
  author={Wang, Hong and Hao, Zhezheng and Luo, Jian and Wei, Chenxing and Shu, Yao and Liu, Lei and Lin, Qiang and Dong, Hande and Chen, Jiawei},
  booktitle={International Conference on Learning Representations},
  volume={2026},
  pages={154734--154753},
  year={2026}
}

@inproceedings{ming2026ideal,
  title={IDEAL: Data Equilibrium Adaptation for Multi-Capability Language Model Alignment},
  author={Ming, Chenlin and Qu, Chendi and Cai, Mengzhang and Pei, Qizhi and Pan, Zhuoshi and Li, Yu and Duan, Xiaoming and Wu, Lijun and He, Conghui},
  booktitle={The Fourteenth International Conference on Learning Representations},
  year={2026},
  url={https://openreview.net/forum?id=n9wS0Hdvri}
}

@misc{yang2025diversitydrivendataselectionlanguage,
  title={Diversity-driven Data Selection for Language Model Tuning through Sparse Autoencoder}, 
  author={Xianjun Yang and Shaoliang Nie and Lijuan Liu and Suchin Gururangan and Ujjwal Karn and Rui Hou and Madian Khabsa and Yuning Mao},
  year={2025},
  eprint={2502.14050},
  archivePrefix={arXiv},
  primaryClass={cs.CL},
  url={https://arxiv.org/abs/2502.14050}, 
}

@inproceedings{wang2026opusefficientprincipleddata,
  title={{OPUS}: Towards Efficient and Principled Data Selection in Large Language Model Pre-training in Every Iteration},
  author={Shaobo Wang and Xuan Ouyang and Tianyi Xu and Yuzheng Hu and Jialin Liu and Guo Chen and Tianyu Zhang and Junhao Zheng and Kexin Yang and Xingzhang Ren and Dayiheng Liu and Linfeng Zhang},
  booktitle={Forty-third International Conference on Machine Learning},
  year={2026}
}

@misc{abbas2023semdedupdataefficientlearningwebscale,
  title={SemDeDup: Data-efficient learning at web-scale through semantic deduplication}, 
  author={Amro Abbas and Kushal Tirumala and Dániel Simig and Surya Ganguli and Ari S. Morcos},
  year={2023},
  eprint={2303.09540},
  archivePrefix={arXiv},
  primaryClass={cs.LG},
  url={https://arxiv.org/abs/2303.09540}, 
}

@inproceedings{shao2024balanceddatasamplinglanguage,
  title={Balanced data sampling for language model training with clustering},
  author={Shao, Yunfan and Li, Linyang and Fei, Zhaoye and Yan, Hang and Lin, Dahua and Qiu, Xipeng},
  booktitle={Findings of the Association for Computational Linguistics: ACL 2024},
  pages={14012--14023},
  year={2024}
}

@inproceedings{zhang2025harnessing,
  title={Harnessing Diversity for Important Data Selection in Pretraining Large Language Models},
  author={Chi Zhang and Huaping Zhong and Kuan Zhang and Chengliang Chai and Rui Wang and Xinlin Zhuang and Tianyi Bai and Qiu Jiantao and Lei Cao and Ju Fan and Ye Yuan and Guoren Wang and Conghui He},
  booktitle={The Thirteenth International Conference on Learning Representations},
  year={2025},
  url={https://openreview.net/forum?id=bMC1t7eLRc}
}

@ARTICLE{7439920,
  author={Paul, Saurabh and Drineas, Petros},
  journal={Neural Computation}, 
  title={Feature Selection for Ridge Regression with Provable Guarantees}, 
  year={2016},
  volume={28},
  number={4},
  pages={716-742},
  doi={10.1162/NECO_a_00816}
}

@inproceedings{10.5555/3327144.3327172,
  author = {McCurdy, Shannon},
  booktitle = {Advances in Neural Information Processing Systems},
  title = {Ridge Regression and Provable Deterministic Ridge Leverage Score Sampling},
  year = {2018}
}

@INPROCEEDINGS{9747039,
  author={Cherfaoui, Farah and Kadri, Hachem and Ralaivola, Liva},
  booktitle={ICASSP 2022 - 2022 IEEE International Conference on Acoustics, Speech and Signal Processing (ICASSP)}, 
  title={Scalable Ridge Leverage Score Sampling for the Nyström Method}, 
  year={2022},
  volume={},
  number={},
  pages={4163-4167},
  doi={10.1109/ICASSP43922.2022.9747039}
}

@inproceedings{10.5555/3294996.3295140,
  author = {Musco, Cameron and Musco, Christopher},
  title = {Recursive sampling for the Nystr\"{o}m method},
  booktitle = {Advances in Neural Information Processing Systems},
  year = {2017}
}

@article{10.5555/2969239.2969326,
  title={Fast randomized kernel ridge regression with statistical guarantees},
  author={Alaoui, Ahmed and Mahoney, Michael W},
  journal={Advances in Neural Information Processing Systems},
  volume={28},
  year={2015}
}

@book{doi:10.1137/1.9781421407944,
  author = {Golub, Gene H. and Van Loan, Charles F.},
  title = {Matrix Computations - 4th Edition},
  publisher = {Johns Hopkins University Press},
  year = {2013},
  doi = {10.1137/1.9781421407944},
  address = {Philadelphia, PA},
  edition   = {},
  URL = {https://epubs.siam.org/doi/abs/10.1137/1.9781421407944},
  eprint = {https://epubs.siam.org/doi/pdf/10.1137/1.9781421407944}
}

@misc{deepseekai2026deepseekv4,
      title={DeepSeek-V4: Towards Highly Efficient Million-Token Context Intelligence},
      author={DeepSeek-AI},
      year={2026},
}

@misc{kimiteam2026kimik25visualagentic,
      title={Kimi K2.5: Visual Agentic Intelligence}, 
      author={Kimi Team},
      year={2026},
      eprint={2602.02276},
      archivePrefix={arXiv},
      primaryClass={cs.CL},
      url={https://arxiv.org/abs/2602.02276}, 
}

@misc{wang2025ultrafinewebefficientdatafiltering,
      title={Ultra-FineWeb: Efficient Data Filtering and Verification for High-Quality LLM Training Data}, 
      author={Yudong Wang and Zixuan Fu and Jie Cai and Peijun Tang and Hongya Lyu and Yewei Fang and Zhi Zheng and Jie Zhou and Guoyang Zeng and Chaojun Xiao and Xu Han and Zhiyuan Liu},
      year={2025},
      eprint={2505.05427},
      archivePrefix={arXiv},
      primaryClass={cs.CL},
      url={https://arxiv.org/abs/2505.05427}, 
}

@misc{joulin2016bagtricksefficienttext,
      title={Bag of Tricks for Efficient Text Classification}, 
      author={Armand Joulin and Edouard Grave and Piotr Bojanowski and Tomas Mikolov},
      year={2016},
      eprint={1607.01759},
      archivePrefix={arXiv},
      primaryClass={cs.CL},
      url={https://arxiv.org/abs/1607.01759}, 
}

\clearpage
\begingroup
\setcounter{tocdepth}{3}
\linespread{0.5}\selectfont
\tableofcontents
\endgroup


\appendix

\section{Related work}
\label{subsec:recent_progress}

\paragraph{Data selection for pretraining.} Research on LLM pretraining data selection largely evolved from Common-Crawl cleaning and filtering. Early pipelines such as C4 \citep{raffel2020exploring}, RefinedWeb \citep{penedo2023refinedweb}, RedPajama \citep{weber2024redpajama}, SemDeDup \citep{abbas2023semdedupdataefficientlearningwebscale}, FineWeb \citep{penedo2024fineweb}, and Dolma-V1 \citep{soldaini2024dolma} established the standard recipe of heuristic filtering and deduplication, defining the candidate corpora on which later selection methods operate. A more explicit formulation appeared in DSIR \citep{xie2023data}, which selects data by matching a target reference distribution (e.g., Wikipedia and books) via importance resampling in an $n$-gram feature space. Subsequent work moved toward model-based quality filtering, where a lightweight model is trained and scales selection to web-scale corpora, as in DCLM-fastText \citep{li2024datacomp}, FineWeb-Edu \citep{penedo2024fineweb}, QuRater \citep{wettig2024qurating}, AskLLM \citep{sachdeva2026askllm} and DataMan \citep{peng2025dataman}. More recent methods make selection increasingly training-aware: PDS \citep{gu2024data}, LQS \citep{dai2025dataefficacy} and PDPC \citep{zhang2025preference} incorporate model's gradient or perplexity preference during training, while OPUS \citep{wang2026opusefficientprincipleddata}, MATES \citep{yu2024mates}, Group-MATES \citep{yu2025groupmates}, BLISS \citep{hao2025bliss}, and In-Run Data Shapley \citep{wang2025datashapley} use optimizer, model-state, data influence, or bilevel optimization signals to adapt selection to the evolving model. The quality-filtering methods can be further applied within each domain organized by \citep{wettig2025organize} to strengthen the effectiveness. In parallel, another line improves data quality through rewriting rather than filtering alone, including REWIRE \citep{nguyen2025rewire}, RePro \citep{yu2025repro}, and Nemotron-CC \citep{su2024nemotroncc}. Finally, several recent works emphasize that pretraining data selection is fundamentally a set-level optimization problem, not only a pointwise scoring problem: DiSF \citep{fan2025combatting} and Density \citep{sachdeva2026askllm} highlight the role of diversity, DoReMi \citep{xie2023doremi} optimizes source mixture weights, and Quad \citep{zhang2025harnessing}, QuaDMix \citep{liu2025quadmix} and DATAMASK \citep{fan2025joint} jointly optimize multiple signals such as quality and diversity. These developments motivate studying diversified data selection as a corpus-level objective that balances quality, redundancy, and coverage under a fixed training budget.

\paragraph{Data selection beyond pretraining.}
Beyond pretraining, data selection in later LLM training stages has developed along three closely related directions. First, a substantial line of work studies subset selection for instruction tuning, where the main question is how to identify a small but high-value subset from a large instruction pool. Early and concurrent studies emphasize the role of diversity and representative coverage, including sparse autoencoder as representatives \citep{yang2025diversitydrivendataselectionlanguage}, DPP-based subset selection \citep{wang2024diversity}, robustness-oriented diversity analysis \citep{bukharin2024diversity}, systematic diversity measurement and NovelSum \citep{yang2025measuring}, semantic-space information gain maximization \citep{chen2025mig}, diversity-preserved clustering-and-ranking pipelines \citep{ge2024clustering}, and joint criteria over diversity, difficulty, and dependability \citep{zhang2025d3}. Recent evidence further suggests that, in generative fine-tuning, coverage can be more important than difficulty-based scoring alone \citep{pandia2026coverage}. Second, in alignment and post-training, data selection becomes more tightly coupled with preference structure, response diversity, and safety objectives. Representative examples include diversity-quality joint optimization in post-training \citep{chen2026dqo}, modeling diversified human preferences in alignment \citep{zeng2024diversified}, selecting preference data using fine-grained preference signals \citep{zhang2025beyondsingle}, and safety-aware bilevel data selection for aligned fine-tuning \citep{shen2025seal}. Third, several recent works move beyond static subset choice and incorporate scheduling or mixture adaptation into the selection problem itself, such as RL-guided subset search for finetuning \citep{jha2025rlguided}, curriculum-style scheduling for RL-based post-training \citep{wang2025reschedule}, and dynamic mixture rebalancing across capabilities in multi-domain SFT \citep{ming2026ideal}. Overall, compared with pretraining, these later-stage methods typically operate on much smaller but cleaner datasets, place stricter requirements on annotation quality and formatting consistency, and treat data selection less as web-scale noise filtering and more as controlled optimization over diversity, preference structure, curriculum, safety, and capability balance.


\section{Supplementary mathematical analyses}
\label{math}

\subsection{Proof of Theorem \ref{geothm}}
\label{proof0}

\begin{proof}
Let $G=\frac{1}{n}X^\top X + \lambda I_d$. We consider the matrix determinant lemma~\citep{doi:10.1137/1.9781421407944}, a basic relationship in linear algebra as follows.
\[
\det(G+u v^\top) = \det(G)\cdot(1+v^\top G^{-1}u).
\]
Substituting $u,v$ both with $z$, we have
\[
\begin{split}
    \det(G+z z^\top)&=\det(G)\cdot(1+z^\top G^{-1} z)\\
    &=\det(G)\cdot(1+\ell^\lambda(z;X)).
\end{split}
\]
Rearranging this yields
\[
\ell^\lambda(z;X)=\frac{\det(G+z z^\top)}{\det(G)}-1.
\]

\end{proof}

\subsection{Proof of Theorem \ref{reprethm}}
\label{proof}

\begin{proof}
Suppose the SVD of $\frac{1}{\sqrt{n}}X = U\Sigma V^\top$, where \(U\) and \(V\) are orthonormal matrices. So we have \(U^\top U=VV^\top=I_d\). Then, the regularized Gram matrix should become
\[
    \begin{split}
        & \frac{1}{n}X^\top X + \lambda I_d \\
        &= V\Sigma^\top U^\top U\Sigma V^\top + \lambda V V^\top \\
        &= V\Sigma^\top \Sigma V^\top + \lambda V I_d V^\top \\
        &=V(\Sigma^\top \Sigma + \lambda I_d) V^\top.
    \end{split}
\]
Hence, the inverse matrix has the form of
\[
    \begin{split}
        & \left(\frac{1}{n}X^\top X + \lambda I_d\right)^{-1} \\
        &= (V(\Sigma^\top \Sigma + \lambda I_d) V^\top)^{-1} \\
        &= (V^\top)^{-1}(\Sigma^\top \Sigma + \lambda I_d)^{-1} V^{-1}\\
        &= V(\Sigma^\top \Sigma + \lambda I_d)^{-1} V^\top.
    \end{split}
\]
Finally, by definition we have
\[
    \begin{split}
       \ell^{\lambda}(z;X) &= z^\top \left(\frac{1}{n}X^\top X + \lambda I_d\right)^{-1}z \\
        &= z^\top V(\Sigma^\top \Sigma + \lambda I_d)^{-1} V^\top z\\
        &=[\tilde{z}_1,...,\tilde{z}_d]\cdot\text{diag}\left\{\frac{1}{\sigma_1^2+\lambda},...,\frac{1}{\sigma_d^2+\lambda}\right\}\cdot[\tilde{z}_1,...,\tilde{z}_d]^\top \\
        &=\sum_{i=1}^{d} \frac{\tilde{z}_i^2}{\sigma_i^2+\lambda}.
    \end{split}
\]

\end{proof}

\subsection{Interpretation of the occurrence frequencies}
\label{intfre}

We first prove that $\sum_{i=1}^{d}\sigma_i^2 = 1$. 

\begin{proof}
    Consider the trace of the Gram matrix, we have
    \[
        \begin{split}
            \mathrm{Tr}(1/n\cdot X^\top X) &= 1/n\cdot\mathrm{Tr}\left(\sum_{i=1}^{n}x_ix_i^\top\right)\\
            &= 1/n\cdot\sum_{i=1}^{n}\mathrm{Tr}(x_ix_i^\top)\\
            &=1/n\cdot\sum_{i=1}^{n}\mathrm{Tr}(x_i^\top x_i)\\
            &=1/n\cdot\sum_{i=1}^{n}x_i^\top x_i\\
            &=1/n\cdot\sum_{i=1}^{n}\|x_i\|_2^2\\
            &=1.
        \end{split}
    \]
    The third equality comes from the fact that the operator \(\mathrm{Tr}\) is invariant to matrix commutation operation. The last equality holds due to the fact that all embedding vectors are normalized.

    On the other hand, considering the SVD of matrix $X$, we have
    \[
        \begin{split}
            \mathrm{Tr}(1/n\cdot X^\top X) &= \mathrm{Tr}(V\Sigma ^\top U^\top U \Sigma V^\top)\\
            &= \mathrm{Tr}(V\Sigma^\top \Sigma V^\top)\\
            &=\mathrm{Tr}(V^\top V\Sigma^\top \Sigma)\\
            &=\mathrm{Tr}(\Sigma^\top \Sigma)\\
            &=\sum_{i=1}^{d} \sigma_i^2.
        \end{split}
    \]

    Combining the above two relations, we have
    \[
        \sum_{i=1}^{d} \sigma_i^2=1.
    \]
\end{proof}

In the following elaborations, we use a simple case to show why \(\{\sigma_i^2\}_{i=1}^{d}\) can be interpreted as frequencies. We already have
\(
\sum_{i=1}^{d}\sigma_i^2 = 1.
\)
Given the non-negativity of $\sigma_i^2 \ge 0$, the set $\{\sigma_i^2\}$ can be viewed as a valid frequency distribution. Next, we consider an extreme and trivial case. Suppose the embedding samples are $X=[x_1, \ldots, x_n]^\top$, but the dataset consists of only the orthonormal bases $\{v_1, \ldots, v_d\}$ of the space $\mathbb{R}^d$. Therefore, some samples in $X$ acutally occur multiple times. Let the frequencies of $\{v_1, \ldots, v_d\}$ be $\{p_1, \ldots, p_d\}$. Then we have
\[
\begin{split}
    \frac{1}{n} X^\top X &= \frac{1}{n}\sum_{i=1}^{n} x_i x_i^\top\\
    &= \sum_{i=1}^{d} p_i \, v_i v_i^\top.
\end{split}
\]
Now consider the singular value decomposition of $X$, there exists some orthonormal matrix $U$ such that 
\[
1/\sqrt{n}\cdot X = U \,\mathrm{diag}\{\sqrt{p_1}, \ldots, \sqrt{p_d}\}\, V^\top =U\Sigma V^\top.
\]
Consequently, we obtain $\sigma_i^2 = p_i$. Extending this intuition to a general dataset $X$, we can intuitively regard the variance $\sigma_i^2$ of the principal component $v_i$ as its occurrence frequency.

\subsection{The choice of the regularization parameter}
\label{morelev}
We show that \(\lambda\) should be a small positive number such that \(\lambda\ll d^{-1}\).

We first consider the case of a perfectly diverse distribution where no more samples contribute to increasing the diversity of the data. In this case, all eigenvalues of the Gram matrix should be equal: $\sigma_1^2=\cdots=\sigma_d^2=1/d$. If we ensure \(\lambda\ll d^{-1}\), then every sample has a leverage score $\ell^\lambda(z) \approx d$, because $$\ell^\lambda(z) = \sum_{i=1}^{d} \frac{\tilde{z}_i^2}{1/d+\lambda}\approx \sum_{i=1}^{d} d{\tilde{z}_i^2}=d\|V^\top z\|_2^2=d.$$ 

Now if the distribution is not perfectly diverse, then there exists a sample $z$ such that $\ell^\lambda(z) > d$. Then there exist some principal semantic vectors such that $\sigma_i^2 < 1/d$, and $z$ is represented mostly along these vectors. Specifically, if a principal semantic vector $v_i$ is rarely covered in the selected data, then its corresponding \(\sigma_i^2\) is approximately zero. Therefore, if we set \(\lambda\) as a very small numeric, then the uncovered semantic signals \(\frac{\tilde{z}_i^2}{\sigma_i^2+\lambda}\approx \frac{\tilde{z}_i^2}{\lambda}\) will be significantly highlighted. Empirically, setting \(\lambda = (100\times d)^{-1}\) is sufficient for the above purposes. 

\section{Implementation details of experiments}
\label{detailsetup}

\subsection{Datasets}

\paragraph{FineWeb~\cite{penedo2024fineweb}.}
FineWeb is a large-scale English web corpus constructed from CommonCrawl and released as a cleaned and deduplicated pretraining dataset. FineWeb contains more than 18.5 trillion tokens of web text and is explicitly designed as a high-quality training corpus for large language models. Its pipeline applies large-scale web data cleaning and deduplication, with the stated goal of improving downstream language modeling performance while preserving the broad coverage and natural diversity of web text. The dataset is built using the \texttt{datatrove} processing framework and is intended as a strong web-data baseline for pretraining research.

For our own web data, we process a CommonCrawl dump with \texttt{datatrove} following similar pipeline steps in FineWeb, including heuristic filtering and deduplication while deliberately excluding quality filtering to preserve the original degree of data diversity. This results in a total amount of around 200B tokens web data.

\paragraph{Datatrove~\cite{penedo2024fineweb}.}
\texttt{datatrove} is a large-scale text data processing library developed for filtering, transforming, and deduplicating raw corpora such as CommonCrawl. The library provides a collection of modular processing blocks together with an extensible framework for adding custom functionality. Its pipeline abstraction is platform-agnostic and can run locally or on cluster environments such as Slurm, while maintaining relatively low memory usage. This design makes \texttt{datatrove} particularly suitable for reproducible construction of web-scale pretraining datasets, and FineWeb is one of its flagship use cases.

\paragraph{StarCoderData~\cite{li2023starcoder}.}
StarCoderData is the training corpus used for StarCoder and StarCoderBase. It is a large-scale collection containing 783GB of code across 86 programming languages, together with 54GB of GitHub Issues, 13GB of Jupyter notebooks in script and text-code formats, and 32GB of GitHub commits, amounting to approximately 250 billion tokens in total. The dataset is designed to support code language modeling at scale while going beyond plain source files by incorporating additional software-development artifacts such as issues and notebooks. As a result, it provides a broad substrate for studying code generation, code completion, and code understanding across multiple programming languages.

For our own code data, we exclusively consider six programming languages including Python, Java, JavaScript, Go, PHP, and Ruby, aiming to obtain clearer signals from the models' downstream performance. The results in a total amound of around 80B tokens code data.

\subsection{Embedding model}

\paragraph{Qwen3-Embedding-0.6B~\cite{zhang2025qwen3}.} For the computation of the embedding vectors of the samples, we adopt Qwen3-Embedding-0.6B. This is a lightweight text embedding model from the Qwen3 Embedding series, built on the dense Qwen3 foundation models and designed for efficient semantic representation and retrieval tasks. Despite its compact 0.6B parameter size, it inherits the strong multilingual understanding and reasoning capabilities of the Qwen3 family, supporting over 100 languages and various programming languages. The model is optimized for a wide range of embedding applications such as text retrieval, code retrieval, clustering, and classification. It supports long-context inputs of up to 32k tokens and allows flexible user-defined embedding dimensions (from 32 up to 4096), enabling adaptation to different storage or latency constraints. For CommonCrawl data, we truncate each sample to the first 512 tokens and compute the corresponding embedding vectors. For StarCoderData corpus, we truncate each sample to the first 1500 tokens for embedding computation. The computed embeddings are then used for all diversity-based methods, including Lev, DiSF and Density.


\subsection{Baselines}

\paragraph{DiSF~\cite{fan2025combatting}.}
DiSF is a diversity-oriented data selection method designed to mitigate dimensional collapse in language model pretraining data selection. It operates by encouraging a more uniform eigenvalue spectrum of the data covariance matrix, thereby promoting richer and more expressive feature representations. We provide a concise and self-contained introduction of this method below.

Let $\mathcal{D} = \{z_1, \dots, z_D\} \subset \mathbb{R}^d$ denote the full corpus of embeddings, and let $\mathcal{S} \subset \mathcal{D}$ denote the selected subset. The goal of DiSF is to construct $\mathcal{S}$ such that the representation space avoids dimensional collapse by encouraging a uniform spectrum of the feature covariance matrix.

Let the normalized feature covariance matrix of a batch $\mathcal{B}$ be defined as
\[
C(\mathcal{B}) \;=\; \frac{1}{|\mathcal{B}|-1} \sum_{z \in \mathcal{B}} \tilde{z}\tilde{z}^\top,
\]
where $\tilde{z}=(z-\mu)/\sigma$ denotes the embedding vector normalized with batch mean $\mu$ and batch standard deviation $\sigma$. Suppose $C$ has eigenvalues $\{\lambda_i\}_{i=1}^d$. Then the following equality holds.
\[
\mathrm{Var}(\{\lambda_i\}_{i=1}^d)
\;=\;
 \sum_{i=1}^d \left(\lambda_i - \frac{1}{d}\sum_{i=1}^{d}\lambda_i\right)^2
\;=\;
 \|C\|_F^2 - d.
\]
Equivalently, up to a constant scaling factor, minimizing the eigenvalue variance is equivalent to minimizing $\|C\|_F^2 - d$. This characterization provides the theoretical foundation for using the Frobenius norm as a proxy for spectral uniformity. Therefore, DiSF formulates data selection as a greedy maximization problem based on a marginal score function. Specifically, at each step, it selects the sample $z_i \in \mathcal{D} \setminus \mathcal{S}$ that maximizes
\[
f(z_i) \;=\; \exp\!\left(-\big\| C(\mathcal{S} \cup \{z_i\}) \big\|_F \right),
\]
which favors samples that reduce the Frobenius norm of the covariance matrix when added to the current set. In practice, the algorithm first separate the corpus into batches of size \(B\) and conduct greedy selection within each batch. Equivalently, the overall objective can be interpreted as approximately minimizing the spectral concentration of the selected set
\[
\min_{\mathcal{S} \subset \mathcal{B}, \, |\mathcal{S}| = \alpha B} \;\; \big\| C(\mathcal{S}) \big\|_F,
\]
where $\alpha \in (0,1)$ is the selection ratio. 

\paragraph{Density~\cite{sachdeva2026askllm}.}
Density sampling is a data selection strategy based on density estimation in the embedding space. The Density method aims to construct a subset $\mathcal{S} \subset \mathcal{D}$ that improves coverage of the corpus by downweighting samples from highly populated regions and upweighting those from relatively sparse regions. This approach ensures that both dense and sparse regions of the data space are adequately represented, leading to improved coverage and density-uniformity in training data. We provide a concise and self-contained introduction of this method here.

Let $\mathcal{D} = \{x_1, \dots, x_D\} \subset \mathbb{R}^d$ denote the full corpus represented in an embedding space. We approximate the sample density through a randomized hashing scheme based on Signed Random Projections (SRP), which provides an efficient sketch of local sample concentration.

Let $R$ denote the number of independent hash functions and $H$ denote the hash range. For each hash function $r \in \{1, \dots, R\}$, the method draws random projection directions and maps each sample $x \in \mathbb{R}^d$ to a binary code according to the signs of its projected coordinates. This code is then converted into an integer bucket index
\[
h_r(x) \in \{0, 1, \dots, H-1\}.
\]
Each hash function therefore partitions the embedding space into random cells, and samples that fall into the same bucket are treated as locally similar under that projection.

After hashing the corpus, a histogram sketch is constructed
\[
S_{r,b} = \sum_{j=1}^{D} \mathbf{1}[h_r(x_j) = b],
\]
which counts how many samples fall into bucket $b$ under hash function $r$. The density score of a sample $x_i$ is then estimated as the average bucket occupancy across all hash functions:
\[
\mathrm{score}(x_i)
=
\frac{1}{R} \sum_{r=1}^{R} S_{r,h_r(x_i)}.
\]
A larger score indicates that $x_i$ lies in a denser region of the embedding space.

Sampling probabilities are defined inversely with respect to this density estimate. Specifically, the unnormalized weight is
\[
w(x_i) \propto \frac{1}{\mathrm{score}(x_i)}.
\]
Thus, samples from dense regions receive smaller weights, while samples from sparse regions receive larger weights. A subset of size $k$ is then drawn without replacement according to these inverse-density weights.

In our implementation, the sketch uses $R = 1024$ hash functions and a hash range of $H = 262{,}144$ buckets. Each hash function is constructed from
\[
m = \lceil \log_2 H \rceil
\]
random hyperplanes, yielding an $m$-bit binary code (with $m=18$ in our setting). The resulting projection tensor therefore has shape $R \times d \times m$.

For efficiency, the corpus is processed in mini-batches of size $20{,}000$ when constructing the histogram sketch $S \in \mathbb{N}^{R \times H}$, and all hashing operations are executed on GPU. Random projections are initialized with a fixed seed to ensure reproducibility, while a separate seed is used during sampling to obtain deterministic results.

\paragraph{DSIR~\cite{xie2023data}.}
DSIR (Data Selection via Importance Resampling) selects training samples by aligning the empirical distribution of the selected dataset with a target domain distribution. It leverages n-gram text features to estimate importance weights, favoring samples that better match domains such as Wikipedia or books. This method is particularly effective for domain adaptation and improving performance on targeted evaluation benchmarks. However, its targeted domain-matching behavior sometimes induces serious dimensional collpase problem.

\paragraph{DCLM-fastText~\cite{li2024datacomp}.}
DCLM-fastText is a quality-based data scoring method introduced in the DataComp-LM benchmark. It trains a lightweight fastText binary classifier to distinguish high-quality text from noisy web data, thereby enabling efficient large-scale filtering. Importantly, the positively labeled reference data is not drawn from conventional curated corpora such as Wikipedia or books. Instead, the classifier is trained on a mixture of \textbf{OpenHermes 2.5} and high-scoring posts from the Reddit \textbf{r/ExplainLikeImFive (ELI5)} community. Concretely, the DataComp-LM pipeline samples examples from OpenHermes 2.5, a synthetic instruction-following dataset for general-purpose assistants, and combines ELI5 question threads with their top-scoring answers selected according to community karma scores. This positive set is therefore dominated by high-quality \emph{instruction-formatted} and \emph{question-answer} style text that spans a broad range of topics, rather than encyclopedic or book-style prose. Negative examples are sampled randomly from a RefinedWeb-style web corpus. After training, the classifier assigns each document a quality score based on its resemblance to this positive distribution, and the corpus is filtered by retaining the top-scoring documents. Owing to the efficiency of fastText, this approach scales well to web-scale corpora while yielding strong empirical performance.


\subsection{Pretrained model configurations and training details}
\label{model_details}

We train all 1.1B models using the Megatron framework on 60B pre-tokenized tokens. With a batch size of 1024 and a context length of 2048, this configuration results in 28,610 training iterations. Detailed model hyperparameters are provided in Table~\ref{tab:llama31_1p1b_pretrain}. The training and evaluation pipeline proceeds as follows: First, the selected corpus is converted into a pre-tokenized binary format compatible with Megatron. Pre-training is then performed under a distributed setup. Upon completion, the Megatron-format checkpoints are converted to the Hugging Face format to facilitate standardized evaluation. Finally, we assess the models using OLMES, which provides a unified pipeline for measuring performance across benchmarks.


\begin{table}[ht]
\centering
\renewcommand{\arraystretch}{1.2}
\caption{Detailed configurations of the 1.1B model of Llama 3.1 architecture.}
\vspace{10pt}
\label{tab:llama31_1p1b_pretrain}
\begin{tabular}{ll}
\toprule
Parameter name & Value \\
\midrule
Parameter number & 1,143,556,096 \\
Hidden size & 2048 \\
Intermediate Hidden Size & 4096 \\
Context Len & 2048 \\
Heads & 32 \\
Layers & 20 \\
Minimum learning rate & 4e-5 \\
Maximum learning rate & 4e-4 \\
$\beta_1$ of optimizer & 0.9 \\
$\beta_2$ of optimizer & 0.95 \\
Warmup steps & 1000 \\
Batch size & 1024$\times$2048 tokens \\
Weight decay & 0.1 \\
Activation function & SwiGLU \\
Gradient clipping threshold & 1.0 \\
\bottomrule
\end{tabular}
\end{table}

\subsection{Benchmarks}

\paragraph{ARC-Easy~\citep{clark2018think}.}
ARC-Easy is the easier subset of the AI2 Reasoning Challenge, a benchmark of grade-school science multiple-choice questions. Each example consists of a natural-language science question and a small set of answer candidates, and the task evaluates factual understanding together with elementary scientific and commonsense reasoning. According to the Hugging Face dataset card, ARC-Easy contains 2,251 training examples, 570 validation examples, and 2,376 test examples. It is commonly used to assess broad knowledge recall and relatively direct reasoning ability under a multiple-choice setup. 

\paragraph{ARC-Challenge~\citep{clark2018think}.}
ARC-Challenge is the harder subset of the AI2 Reasoning Challenge. In contrast to ARC-Easy, its questions were selected to be difficult for simple retrieval-based and co-occurrence-based baselines, making the benchmark substantially more reasoning-intensive. The Hugging Face dataset card reports 1,119 training examples, 299 validation examples, and 1,172 test examples. The benchmark is designed to probe scientific question answering under more demanding compositional and commonsense reasoning conditions.

\paragraph{WinoGrande~\citep{sakaguchi2020winogrande}.}
WinoGrande is a large-scale commonsense reasoning benchmark inspired by the Winograd Schema Challenge. It is formulated as a fill-in-the-blank task with two candidate options, where the model must resolve an ambiguous reference using commonsense knowledge rather than shallow lexical cues. The Hugging Face dataset card describes WinoGrande as a collection of roughly 44K problems. In the commonly used debiased configuration on Hugging Face, the dataset contains 9,248 training examples, 1,267 validation examples, and 1,767 test examples. This benchmark primarily evaluates pronoun resolution and referential commonsense reasoning.

\paragraph{HellaSwag~\citep{zellers2019hellaswag}.}
HellaSwag is a benchmark for commonsense natural language inference and grounded sentence completion. Each instance presents a context and several candidate continuations, and the model must choose the most plausible continuation. The dataset is constructed adversarially so that the incorrect endings are highly confusable, making the task difficult for models despite being relatively easy for humans. The Hugging Face dataset card reports a total of 59,950 examples. HellaSwag is widely used to evaluate everyday event understanding, plausibility judgment, and broad commonsense inference.

\paragraph{PIQA~\citep{bisk2020piqa}.}
PIQA, short for Physical Interaction Question Answering, is a benchmark for physical commonsense reasoning. Its questions are grounded in everyday goal-oriented situations, often involving tools, materials, and basic interactions with physical objects. The Hugging Face dataset card notes that PIQA is motivated by practical scenarios such as those found on instructional websites, and reports 16,000 training examples, 2,000 development examples, and 3,000 test examples. The benchmark primarily measures whether a model can reason about plausible physical actions and affordances.

\paragraph{Social IQa~\citep{sap2019socialiqa}.}
Social IQa is a multiple-choice question answering benchmark for social commonsense intelligence. Each example describes a social situation and asks a question about motives, reactions, or likely consequences, requiring the model to reason about human behavior and social implications. The Hugging Face dataset card states that the benchmark contains over 37,000 question-answer pairs. In the Hugging Face release, the labeled splits contain 33,410 training examples and 1,954 validation examples. Social IQa is intended to evaluate models' ability to infer mental states, intentions, and likely social outcomes.

\paragraph{CommonSenseQA~\citep{talmor2019commonsenseqa}.}
CommonSenseQA is a multiple-choice commonsense question answering benchmark that requires diverse forms of everyday background knowledge. Each question is paired with one correct answer and four distractors, making it a five-way classification problem. The Hugging Face dataset card reports that the dataset contains 12,102 questions in total. The benchmark is designed to test broad commonsense reasoning, including taxonomic, causal, spatial, and functional knowledge.

\paragraph{Multilingual MBPP~\citep{cassano2022multiplescalableextensibleapproach}.}
The multilingual MBPP benchmark used in our experiments corresponds to the MBPP portion of MultiPL-E. MultiPL-E extends code generation benchmarks to multiple programming languages by translating the original Python tasks into target languages while preserving executable test-based evaluation. According to the Hugging Face dataset card, MultiPL-E supports 22 programming languages. The underlying MBPP benchmark consists of 974 crowd-sourced programming problems designed for entry-level programmers, covering programming fundamentals and standard library usage. In our setting, we evaluate the six language subsets corresponding to the programming languages considered in the main experiments. This benchmark primarily measures multilingual code synthesis from natural-language problem descriptions.

\paragraph{HumanEval~\citep{chen2021evaluating}.}
HumanEval is a code generation benchmark consisting of handwritten Python programming problems with unit-test-based evaluation. Each example contains a function signature, a natural-language docstring, a canonical solution, and a hidden correctness test. The Hugging Face dataset card reports that HumanEval contains 164 problems and only provides a test split. The benchmark is widely used to evaluate functional code generation accuracy under an execution-based metric, and is particularly useful for assessing whether a model can synthesize correct code from concise natural-language specifications.

\newpage

\subsection{Diversity metrics}
\label{div_define}

\noindent
Let $X = \{x_1, \dots, x_n\} \subset \mathbb{R}^d$ denote the embedding vectors of $n$ samples where each vector has norm $\|x_i\|_2=1$. Let $d(x_i, x_j)$ denote the $L_2$ distance between $x_i$ and $x_j$. We denote by $N_k(x_i)$ the $k$-th nearest neighbor of $x_i$ under $d(\cdot,\cdot)$. We explicitly define the seven diversity metrics below.

\paragraph{CluIne~\cite{du2019boosting}.}
Let $\{C_k\}_{k=1}^K$ be a partition of $X$ obtained by the clustering algorithm, $K$-means. The cluster inertia is defined as
\[
\mathrm{CluIne}(X) \;=\; \sum_{k=1}^{K} \sum_{x_i \in C_k} \|x_i - \mu_k\|_2^2,
\quad \mu_k = \frac{1}{|C_k|}\sum_{x_j \in C_k} x_j.
\]
This metric quantifies the within-cluster variance of the dataset under the induced partition. A larger value indicates that samples are more dispersed relative to their assigned centroids, reflecting greater global spread across clusters. However, its interpretation depends on the choice of $K$ and the clustering procedure, and it does not explicitly capture fine-grained pairwise relationships or local redundancy. For all metrics that incorporates $K$-means, we set $K=25$ for the training of clustering.

\paragraph{LogDet~\cite{wang2024diversity}.}
The log-determinant diversity is defined as
\[
\mathrm{LogDet}(X) \;=\; \frac{1}{n}\log \det \big( L ),
\]
where $L$ is the element-wise exponential of the similarity matrix of $X$ such that $L_{ij}=\exp(2x_i^\top x_j)$. This metric measures the volume spanned by the sample embeddings in the induced feature space. A larger value indicates that the samples are more linearly independent and less redundant, thereby capturing global diversity through the geometry of the representation space.

\paragraph{CosDist~\cite{yang2025measuring}.}
The cosine-distance-based diversity is defined as
\[
\mathrm{CosDist}(X) \;=\; \frac{1}{n(n-1)}\sum_{i \neq j} \left(1 - x_i^\top x_j\right).
\]
This metric aggregates pairwise dissimilarities between all samples based on cosine distance. It promotes global separation among samples by assigning larger values to configurations where embeddings are mutually dissimilar. Nevertheless, it treats all pairwise relationships uniformly and may overemphasize contributions from distant pairs while underrepresenting local geometric structure.

\paragraph{Entropy.}
Let $\lambda_1, \dots, \lambda_n$ denote the eigenvalues of the Gram matrix $G=\frac{1}{n}X^\top X$. The spectral entropy is defined as
\[
\mathrm{Entropy}(X) \;=\; - \sum_{i=1}^n {\lambda}_i \log {\lambda}_i.
\]
This metric captures how evenly the variance of the dataset is distributed across principal components. A higher entropy indicates that information is spread across multiple directions in the embedding space, implying a more isotropic and diverse representation. In contrast, low entropy corresponds to concentration along a small number of dominant components, which suggests redundancy.

\paragraph{KNN~\cite{stasaski2022semantic}.}
The KNN-based diversity is defined as
\[
\mathrm{KNN}(X) \;=\; \frac{1}{n} \sum_{i=1}^{n} d\big(x_i, N_k(x_i)\big),
\]
typically with $k=1$. This metric focuses on local geometry by measuring the distance between each sample and its nearest neighbors. Larger values indicate that samples are locally isolated, which reflects stronger sample-level uniqueness. Compared to global aggregation methods, this formulation is more sensitive to redundancy among nearby samples.

\paragraph{NovelSum~\cite{yang2025measuring}.}
The NovelSum metric defines dataset diversity as the sum of sample-level novelty scores
\[
\mathrm{NovelSum}(X) \;=\; \sum_{i=1}^{n} v(x_i),
\quad
v(x_i) \;=\; \sum_{j \neq i} w_{ij}^{\alpha} \, \Delta(x_i, x_j),
\]
where $w_{ij} = \phi(\pi_i(j))$ is a monotonically decreasing function of the rank $\pi_i(j)$ of $x_j$ in the sorted list of distances from $x_i$. For instance, $\pi_i(j)=1$ indicates that $x_j$ is the nearest neighbor of $x_i$. Typically, we set \(\phi(\pi_i(j))=\frac{1}{\pi_i(j)}\). The distance is further defined as
\[
\Delta(x_i, x_j) \;=\; \sigma(x_j)^{\beta} \, d(x_i, x_j),
\]
where $d(\cdot,\cdot)$ is the $L_2$ distance metric and $\sigma(x_j)$ is a density factor that approximates the inverse local density around $x_j$. For example, we use $$\sigma(x_j)=\frac{1}{\sum_{k=1}^{K}d(x_j,N_k(x_j))}.$$ This formulation integrates proximity-aware weighting and density-aware scaling, thereby emphasizing differences with nearby samples while accounting for heterogeneous information density in the space. As a result, NovelSum provides a principled estimate of each sample’s unique contribution to the overall dataset.

\paragraph{Vendi score~\cite{pasarkar2023cousins}.}
Let $\tilde{\lambda}_1, \dots, \tilde{\lambda}_n$ denote the normalized eigenvalues of the sample similarity matrix $XX^\top$ (not the Gram matrix, thus different from Entropy). The Vendi Score of order 1 has an equivalent definition of
\[
\mathrm{Vendi}(X) \;=\; \exp\!\left(- \sum_{i=1}^{n} \tilde{\lambda}_i \log \tilde{\lambda}_i \right).
\]
This metric can be interpreted as the effective number of components contributing to the representation. It transforms spectral entropy into a quantity with a more direct interpretation, where larger values correspond to a richer and more uniformly distributed structure in the embedding space.

\paragraph{Discussion.}
These metrics can be broadly categorized into two classes. Distance-based metrics, including CosDist, KNN, and CluIne, primarily quantify diversity through pairwise or local separations in the embedding space, emphasizing sample-level uniqueness or dispersion. Spectral and distributional metrics, including Entropy, Vendi Score, and LogDet, instead characterize diversity through the global structure of the dataset as reflected in the eigen-spectrum of the Gram matrix. NovelSum integrates both perspectives by jointly modeling local proximity, inter-sample differences, and heterogeneous density, thereby providing a more comprehensive characterization of dataset diversity.

\newpage

\section{Detailed experimental results}
\label{appendix:code_results}

\subsection{Diversity metrics}
\label{table_div_detailed}
In Figure~\ref{radar}, we normalize the results of each diversity metric by applying a linear mapping from \([\text{min},\text{max}]\) to \([0.30,1.00]\) to compare the relative values of different methods. The original values are reported in Table~\ref{div_metric}. Lev consistently achieves the best performance.

\begin{table}[ht]
\renewcommand{\arraystretch}{1.25}
\caption{Dataset diversity comparison at 40\% selection ratio. Higher is better.}
\label{div_metric}
\centering
\small
\begin{tabular}{l|ccccccc}
\noalign{\vskip 0.5pt}
\noalign{\hrule height 0.75pt}
\noalign{\vskip 0.5pt} 
Method & CluIne & LogDet & CosDist & Entropy & KNN & NovelSum & Vendi \\ 
\hline 
Lev (ours) & \textbf{0.767} & \textbf{1.638} & \textbf{0.853} & \textbf{5.317} & \textbf{1.076} & \textbf{0.640} & \textbf{203.805} \\ 
DiSF & \textit{0.757} & \textit{1.623} & 0.832 & \textit{5.229} & \textit{1.071} & 0.626 & \textit{186.629} \\
Density & 0.744 & 1.612 & 0.833 & 5.181 & 1.056 & \textit{0.627} & 177.802 \\
\hline
DCLM-fastText & 0.747 & 1.611 & \textit{0.835} & 5.187 & 1.055 & 0.626 & 178.855 \\
DSIR & 0.732 & 1.599 & 0.821 & 5.122 & 1.053 & 0.611 & 167.730 \\
Random & 0.732 & 1.601 & 0.821 & 5.124 & 1.052 & 0.619 & 168.060 \\
\noalign{\vskip 0.5pt}
\noalign{\hrule height 0.75pt}
\noalign{\vskip 0.5pt}
\end{tabular}
\end{table}


\subsection{Downstream performance on CommonCrawl subset selection}
\label{table_CC_detailed}

We provide full results of the downstream performance over seven language understanding tasks across all baseline methods and Lev for the second evaluation in section~\ref{efflev}. The evaluated datasets are selected from CommonCrawl corpus using various diversification strategies at 40\% and 50\% selection ratios.

\begin{table}[ht]
\renewcommand{\arraystretch}{1.25}
\caption{Downstream performance at 40\% and 50\% web data selection.\\ Accuracies (\%) are Reported. Higher is Better.}
\label{cc_perf_full}
\begin{center}
\begin{small}
\begin{tabular}{l|ccccccc|c}
\noalign{\vskip 0.5pt}
\noalign{\hrule height 0.75pt}
\noalign{\vskip 0.5pt}
\multirow{2}{*}{Method} & \multicolumn{8}{c}{40\% web data selection}\\
\cline{2-9}

& ARC-e & ARC-c & WG & HS & PIQA & SIQA & CSQA & Mean \textuparrow \\
\hline
Random & 48.50$ _{(5)}$ & 27.82$ _{(4)}$ & 52.09$ _{(3)}$ & 46.50$ _{(3)}$ & \textbf{69.40}$ _{(1)}$ & \textbf{46.30}$ _{(1)}$ & 43.00$ _{(3)}$ & 47.66$ _{(4)}$ \\
DSIR & 49.10$ _{(4)}$ & {29.69}$ _{(2)}$ & 51.22$ _{(4)}$ & 39.90$ _{(5)}$ & 64.40$ _{(5)}$ & 43.70$ _{(5)}$ & 35.87$ _{(5)}$ & 44.84$ _{(5)}$ \\
Density & 51.50$ _{(3)}$ & 28.50$ _{(3)}$ & \textbf{53.12}$ _{(1)}$ & {47.40}$ _{(2)}$ & 68.50$ _{(3)}$ & 44.40$ _{(4)}$ & 42.42$ _{(4)}$ & 47.98$ _{(3)}$ \\
DiSF & {53.70}$ _{(2)}$ & \textbf{30.12}$ _{(1)}$ & 51.07$ _{(5)}$ & 45.20$ _{(4)}$ & 67.80$ _{(4)}$ & {45.60}$ _{(2)}$ & {44.96}$ _{(2)}$ & {48.35}$ _{(2)}$ \\
Lev (ours) & \textbf{54.20}$ _{(1)}$ & 27.47$ _{(5)}$ & {52.72}$ _{(2)}$ & \textbf{48.60}$ _{(1)}$ & {69.30}$ _{(2)}$ & 45.00$ _{(3)}$ & \textbf{45.05}$ _{(1)}$ & \textbf{48.91}$ _{(1)}$ \\

\noalign{\vskip 0.5pt}
\noalign{\hrule height 0.75pt}
\noalign{\vskip 0.5pt}
\multirow{2}{*}{Method} & \multicolumn{8}{c}{50\% web data selection}\\
\cline{2-9}
& ARC-e & ARC-c & WG & HS & PIQA & SIQA & CSQA & Mean \textuparrow \\
\hline
Random & 48.50$ _{(5)}$ & 27.82$ _{(4)}$ & 52.09$ _{(4)}$ & 46.50$ _{(3)}$ & \textbf{69.40}$ _{(1)}$ & \textbf{46.30}$ _{(1)}$ & \textbf{43.00}$ _{(1)}$ & 47.66$ _{(3)}$ \\
DSIR & \textbf{53.50}$ _{(1)}$ & 27.73$ _{(5)}$ & \textbf{53.04}$ _{(1)}$ & 42.00$ _{(5)}$ & 65.20$ _{(5)}$ & 44.30$ _{(5)}$ & 42.26$ _{(5)}$ & 46.86$ _{(5)}$ \\
Density & 51.70$ _{(4)}$ & 27.99$ _{(3)}$ & 50.75$ _{(5)}$ & \textbf{48.20}$ _{(1)}$ & 66.40$ _{(4)}$ & 45.00$ _{(3)}$ & 42.51$ _{(3)}$ & 47.51$ _{(4)}$ \\
DiSF & {52.90}$ _{(2)}$ & {28.16}$ _{(2)}$ & {52.49}$ _{(2)}$ & 46.40$ _{(4)}$ & 67.80$ _{(3)}$ & 44.50$ _{(4)}$ & {42.59}$ _{(2)}$ & {47.83}$ _{(2)}$ \\
Lev (ours) & 52.70$ _{(3)}$ & \textbf{29.35}$ _{(1)}$ & 52.33$ _{(3)}$ & {47.10}$ _{(2)}$ & {68.30}$ _{(2)}$ & {45.10}$ _{(2)}$ & 42.34$ _{(4)}$ & \textbf{48.17}$ _{(1)}$ \\
\noalign{\vskip 0.5pt}
\noalign{\hrule height 0.75pt}
\noalign{\vskip 0.5pt}
\end{tabular}
\end{small}
\end{center}
\end{table}

\newpage

\subsection{Downstream performance on StarCoderData subset selection}
\label{table_code_detailed}

We provide full results of the code bits-per-byte over seven code-related tasks across all the diversity-based selection methods. The evaluated dataset are selected from StarCoderData corpus using various diversification strategies at 40\% and 50\% selection ratios.

\begin{table}[ht]
\renewcommand{\arraystretch}{1.25}
\caption{Downstream performance at 50\% and 40\% code data selection.\\ Bits-per-byte($\times 10^{-2}$) is reported. Lower is better.}
\label{code_50_40_2}
\begin{center}
\begin{small}
\begin{tabular}{l|ccccccc|c}
\noalign{\vskip 0.5pt}
\noalign{\hrule height 0.75pt}
\noalign{\vskip 0.5pt}
\multicolumn{9}{c}{\textbf{DCLM-fastText} for the web data.}\\
\hline
\multirow{2}{*}{Method} & \multicolumn{8}{c}{50\% code data selection} \\
\cline{2-9}
 & Python & Go & PHP & Ruby & Java & JS & HumanEval & Mean \textdownarrow \\
\hline
Random & 92.65$ _{(3)}$ & {59.22}$ _{(2)}$ & \textbf{52.15}$ _{(1)}$ & 98.89$ _{(4)}$ & {38.90}$ _{(2)}$ & 68.34$ _{(4)}$ & 53.66$ _{(3)}$ & 66.26$ _{(4)}$ \\
Density & {92.28}$ _{(2)}$ & 61.26$ _{(3)}$ & 53.71$ _{(3)}$ & {91.98}$ _{(2)}$ & 40.68$ _{(4)}$ & {63.55}$ _{(2)}$ & 57.44$ _{(4)}$ & 65.84$ _{(3)}$ \\
DiSF & 96.19$ _{(4)}$ & 61.58$ _{(4)}$ & 54.90$ _{(4)}$ & \textbf{91.08}$ _{(1)}$ & \textbf{38.48}$ _{(1)}$ & 64.35$ _{(3)}$ & \textbf{53.22}$ _{(1)}$ & {65.68}$ _{(2)}$ \\
Lev (ours) & \textbf{88.23}$ _{(1)}$ & \textbf{57.26}$ _{(1)}$ & {53.34}$ _{(2)}$ & 92.23$ _{(3)}$ & 40.60$ _{(3)}$ & \textbf{60.69}$ _{(1)}$ & {53.27}$ _{(2)}$ & \textbf{63.66}$ _{(1)}$ \\
\hline
\multirow{2}{*}{Method} & \multicolumn{8}{c}{40\% code data selection} \\
\cline{2-9}
 & Python & Go & PHP & Ruby & Java & JS & HumanEval & Mean \textdownarrow \\
\hline
Random & 92.65$ _{(4)}$ & \textbf{59.22}$ _{(1)}$ & 52.15$ _{(4)}$ & 98.89$ _{(4)}$ & {38.90}$ _{(2)}$ & 68.34$ _{(4)}$ & \textbf{53.66}$ _{(1)}$ & 66.26$ _{(4)}$ \\
Density & {91.61}$ _{(2)}$ & {61.67}$ _{(2)}$ & \textbf{50.25}$ _{(1)}$ & {93.57}$ _{(2)}$ & 41.22$ _{(4)}$ & {62.23}$ _{(2)}$ & 56.07$ _{(4)}$ & {65.23}$ _{(2)}$ \\
DiSF & 91.97$ _{(3)}$ & 62.58$ _{(3)}$ & {51.26}$ _{(2)}$ & \textbf{93.11}$ _{(1)}$ & 39.64$ _{(3)}$ & 64.12$ _{(3)}$ & 54.96$ _{(3)}$ & 65.38$ _{(3)}$ \\
Lev (ours) & \textbf{90.35}$ _{(1)}$ & 66.65$ _{(4)}$ & 51.87$ _{(3)}$ & 94.99$ _{(3)}$ & \textbf{38.43}$ _{(1)}$ & \textbf{58.76}$ _{(1)}$ & {54.11}$ _{(2)}$ & \textbf{65.02}$ _{(1)}$ \\

\noalign{\vskip 0.5pt}
\noalign{\hrule height 0.75pt}
\noalign{\vskip 0.5pt}
\multicolumn{9}{c}{\textbf{Leverage Sampling} for the web data.}\\
\hline
\multirow{2}{*}{Method} & \multicolumn{8}{c}{50\% code data selection} \\
\cline{2-9}
 & Python & Go & PHP & Ruby & Java & JS & HumanEval & Mean \textdownarrow \\
\hline
Random & 94.59$ _{(4)}$ & 60.57$ _{(3)}$ & 52.60$ _{(4)}$ & 92.30$ _{(3)}$ & 38.74$ _{(4)}$ & 66.27$ _{(4)}$ & {56.65}$ _{(2)}$ & 65.96$ _{(4)}$ \\
Density & {89.78}$ _{(2)}$ & \textbf{58.89}$ _{(1)}$ & 52.25$ _{(3)}$ & \textbf{87.96}$ _{(1)}$ & \textbf{36.64}$ _{(1)}$ & {59.45}$ _{(2)}$ & 59.10$ _{(4)}$ & {63.44}$ _{(2)}$ \\
DiSF & 92.51$ _{(3)}$ & 62.17$ _{(4)}$ & {51.16}$ _{(2)}$ & 93.40$ _{(4)}$ & 38.60$ _{(3)}$ & 61.90$ _{(3)}$ & 56.92$ _{(3)}$ & 65.24$ _{(3)}$ \\
Lev (ours) & \textbf{89.58}$ _{(1)}$ & {60.11}$ _{(2)}$ & \textbf{50.26}$ _{(1)}$ & {90.69}$ _{(2)}$ & {37.60}$ _{(2)}$ & \textbf{59.41}$ _{(1)}$ & \textbf{51.25}$ _{(1)}$ & \textbf{62.70}$ _{(1)}$ \\
\hline
\multirow{2}{*}{Method} &\multicolumn{8}{c}{40\% code data selection} \\
\cline{2-9}
 & Python & Go & PHP & Ruby & Java & JS & HumanEval & Mean \textdownarrow \\
\hline
Random & {89.14}$ _{(2)}$ & 63.69$ _{(4)}$ & {52.39}$ _{(2)}$ & 96.05$ _{(4)}$ & 39.88$ _{(3)}$ & \textbf{61.87}$ _{(1)}$ & 56.48$ _{(3)}$ & {65.64}$ _{(2)}$ \\
Density & 94.44$ _{(3)}$ & \textbf{61.98}$ _{(1)}$ & 53.84$ _{(3)}$ & 94.44$ _{(3)}$ & 39.93$ _{(4)}$ & {62.18}$ _{(2)}$ & 57.91$ _{(4)}$ & 66.39$ _{(4)}$ \\
DiSF & 94.70$ _{(4)}$ & {62.04}$ _{(2)}$ & 54.22$ _{(4)}$ & \textbf{92.48}$ _{(1)}$ & \textbf{39.09}$ _{(1)}$ & 63.65$ _{(3)}$ & {53.99}$ _{(2)}$ & 65.74$ _{(3)}$ \\
Lev (ours) & \textbf{88.93}$ _{(1)}$ & 63.34$ _{(3)}$ & \textbf{52.36}$ _{(1)}$ & {92.92}$ _{(2)}$ & {39.15}$ _{(2)}$ & 63.96$ _{(4)}$ & \textbf{50.56}$ _{(1)}$ & \textbf{64.46}$ _{(1)}$ \\

\noalign{\vskip 0.5pt}
\noalign{\hrule height 0.75pt}
\noalign{\vskip 0.5pt}
\end{tabular}
\end{small}
\end{center}
\end{table}

\newpage

\subsection{Quality-filtered CC data shows inferior code performance relative to Lev-selected CC data}
\label{infapp}

To isolate the impact of quality-filtered CC on code performance, we conduct a two-stage analysis. First, pretraining solely on CC data reveals that quality-filtered CC yields markedly higher code BPB than Lev-selected CC (Table~\ref{tab:b}), indicating substantial loss of code-related content. We then augment both CC subsets with dedicated code data from StarCoderData and retrain. Despite this supplementation, the quality-filtered mixture continues to underperform in code evaluation (Table~\ref{tab:a}), confirming that the induced code degradation persists. This robustness holds across two additional code datasets selected by Lev in Table~\ref{comp1_full}.

\begin{table}[ht]
\renewcommand{\arraystretch}{1.35}
\centering
\caption{Comparison between DCLM-fastText and leverage score sampling under\\ different code data. BBP($\times 10^{-2}$) for Code Mean and Accuracy(\%) for Base Mean.}
\label{comp1_full}
\begin{center}
\begin{small}
\begin{tabular}{l|c|c|c}
\noalign{\vskip 0.5pt}
\noalign{\hrule height 0.75pt}
\noalign{\vskip 0.5pt}

Code Data & Web Data & Code Mean \textdownarrow & Base Mean \textuparrow \\
\hline

\multirow{4}{*}{Random} & Lev (top 40\%) & \textbf{65.64}$ _{(1)}$ & 49.72$ _{(3)}$ \\
 & Lev (top 50\%) & 65.96$ _{(2)}$ & 49.85$ _{(2)}$ \\
 & Qual (top 40\%) & 66.26$ _{(3)}$ & \textbf{50.46}$ _{(1)}$ \\
 & Random & 66.48$ _{(4)}$ & 48.76$ _{(4)}$ \\
\hline
\multirow{4}{*}{Lev (top 50\%)} & Lev (top 40\%) & \textbf{62.48}$ _{(1)}$ & 49.90$ _{(2)}$ \\
 & Lev (top 50\%) & 62.70$ _{(2)}$ & 49.17$ _{(4)}$ \\
 & Qual (top 40\%) & 63.66$ _{(3)}$ & \textbf{51.53}$ _{(1)}$ \\
 & Random & 64.46$ _{(4)}$ & 49.28$ _{(3)}$ \\
\hline
\multirow{4}{*}{Lev (top 40\%)} & Lev (top 40\%) & 64.46$ _{(2)}$ & 49.96$ _{(3)}$ \\
 & Lev (top 50\%) & \textbf{63.55}$ _{(1)}$ & 50.04$ _{(2)}$ \\
 & Qual (top 40\%) & 65.02$ _{(4)}$ & \textbf{50.10}$ _{(1)}$ \\
 & Random & 64.75$ _{(3)}$ & 49.81$ _{(4)}$ \\

\noalign{\vskip 0.5pt}
\noalign{\hrule height 0.75pt}
\noalign{\vskip 0.5pt}
\end{tabular}
\end{small}
\end{center}
\end{table}

\newpage
\section{Additional discussions}

\subsection{Why does Lev achieve better diversity optimization results than DiSF?}
\label{divopt}

We answer an equivalent question: \textit{Why the leverage-score objective should be preferable to DiSF's Frobenius objective?} This question can be answered theoretically. We show that Lev is more sensitive in detecting diverse samples. We demonstrate this in two steps.

\textbf{Step 1.}
{Recall that in our Theorem \ref{reprethm}, $\ell(z)=\sum \tilde{z_i}^2/\sigma_i^2$}, where $\tilde{z}_i=v_i^\top z$ is the projection of $z$ on the $i$-th principal vector of $G=X^\top X/n$ and $\sigma_i^2$ be the eigenvalues of $G$. We omit $\lambda I_d$ for leverage scores for brevity, since this does not affect our analysis. The criterion $F(z)=-\vert\vert(1-1/n)G+1/n\cdot zz^\top\vert\vert_F^2$ that {DiSF uses has a similar statement}. We briefly deduce as follows.

Let $f(\eta)=-\vert\vert(1-\eta)G+\eta zz^\top\vert\vert_F^2$. The directional derivative of the matrix function $M(G):=-\vert\vert G\vert\vert_F^2$ toward direction $zz^\top$ is exactly $\frac{df}{d\eta}|_{\eta=0}$. This quantity describes how fast the candidate sample $z$ increases the DiSF criterion $M(G)$.

Let $E=zz^\top - G$, then 
\[
f(\eta)=-\vert\vert G+\eta E\vert\vert_F^2=-Tr((G+\eta E)^2)=-Tr(G^2)-2\eta Tr(GE)-o(\eta^2).
\]

Hence, $f^\prime(\eta)=-2Tr(GE)-o(\eta)$. Let $c=2Tr(G^2)$ be the constant, we obtain
\[
\begin{split}
\frac{df}{d\eta}|_{\eta=0}&=-2Tr(GE)=-2Tr(G(zz^\top-G))=-2Tr(Gzz^\top)+2Tr(G^2)\\
&=-2Tr(z^\top Gz)+c=-2z^\top Gz+c.
\end{split}
\]

Thus, the criterion in DiSF is equivalent to 
\[
d(z)=-z^\top Gz,
\]
which we refer to as DiSF score. {Applying similar deductions in our Theorem \ref{reprethm} gives} 
\[
d(z)=-\sum_{i=1}^{d} \sigma_i^2 \tilde{z}_i^2 \quad (*).
\]

\textbf{Step 2.} Now, we show that {DiSF can be less sensitive than Lev for capturing samples of rare semantics} (i.e. diverse samples) in a very broad scenario. We illustrate this through a case study, in which we consider only three principal components of the Gram matrix for brevity.

Suppose for the currently selected samples $X$, we have three principal semantic vectors $v_1,v_2,v_3$, and their corresponding eigenvalues be $(\sigma_1^2, \sigma_2^2, \sigma_3^2)=(3/4, 1/4-10^{-2}, 10^{-2})$. This means that the third semantic is extremely rare. {Considering maximizing the Shannon entropy of the eigenvalues (e.g., Vendi Score), samples that allocate more weights on the third dimension provide more diversity gains} and should be preferred.

We illustrate the difference of the two methods with an example. Suppose two candidate samples S1 and S2, and suppose the $(\tilde{z}_1^2,\tilde{z}_2^2,\tilde{z}_3^2)$ of the two samples are
\[
\text{S1}:\quad (0, 1/2, 1/2)
\]
\[
\text{S2}:\quad (1/6, 0, 5/6)
\]

Because S2 has more weight on the third dimension, we expect {a good algorithm should prefer S2 over S1}.

For DiSF:
\[
d(\text{S1})=-(1/4-10^{-2})\cdot 1/2-10^{-2}\cdot 1/2=-1/8
\]
\[
d(\text{S2})=-(3/4)\cdot 1/6-10^{-2}\cdot 5/6=-1/8-10^{-2}\cdot 5/6
\]

Thus $d(\text{S1})>d(\text{S2})$, DiSF mistakenly select S1 in this case.

For Lev:
\[
\ell(\text{S1})=1/(1/4-10^{-2})\cdot 1/2+1/10^{-2}\cdot 1/2\approx 52
\]
\[
\ell(\text{S2})=1/(3/4)\cdot 1/6+1/10^{-2}\cdot 5/6\approx 83.6
\]

Thus $\ell(\text{S2})>\ell(\text{S1})$, Lev correctly select S2.

{This sensitivity is rooted in that Lev uses $1/\sigma_i^2$ as the component weights for $\tilde{z}_i^2$ (Theorem \ref{reprethm}), which are more sensitive than the linear weights $-\sigma_i^2$ in DiSF (see Eq. $(*)$), especially when $\sigma_i^2$ is around $0$, $1/\sigma_i^2$ goes to infinity.} For DiSF with the linear weight, if a sample (e.g. sample S2) also allocate some semantic weights on the frequently occurred directions (measured by the $-\sigma_1^2\tilde{z}_1^2-\sigma_2^2\tilde{z}_2^2$ value of sample S2), the contribution of the rare semantic ($-\sigma_3^2\tilde{z}_3^2$) might not provide signals significant enough to make the sample selected. In contrast, the leverage weight $1/\sigma_i^2$ that Lev uses is able to significantly amplify the impact of the rare semantic ($\tilde{z}_3^2/\sigma_3^2 \gg 1$) to make it selected.

This example actually covers a very broad scenario where DiSF could miss these desired samples. Because through the selection process, \textit{as the size of the selected subset $X$ gets larger, samples that partly contains rare semantics will also contain some other semantics that have occurred in $X$ with very high probability}. In this case, {Lev is still able to distinguish samples that are actually more diverse, but DiSF could miss them}, which is exactly what the given example shows.

\subsection{Insufficient recall of code data is a general problem among open-source data scorers}
\label{appe2}

We first clarify the fairness of the quality-filtering comparison in the code setting, showing that the {quality-filtering baseline \textit{DCLM-fastText} in our paper does include code data in its training set}. 

Specifically, DCLM-fastText is trained with OpenHermes-2.5 (OH2.5) and reddit ELI5 as positive samples, and random documents from RefinedWeb as negative samples. OH2.5 is a high-quality QA-style dataset aggregating multiple instrunction-tuning datasets covering a diverse range of topics, which includes \textit{glaive-code-assist}, a dataset of code problems and solutions covering python, c/c++, java and many other programming languages. Statistic shows that {code data occupies 18.2\%} of all the approximately 1M samples in OH2.5.

So DCLM-fastText does include a proportion of code data in its training set. Yet, the recall of code data using DCLM-fastText is not so strong as Lev, which supports our claim that quality-filtering demands diversity for potentially higher recall of data in other domains.

To further substantiate our claim, we design a heurstic classifier to identify the code-related docs in each subset selected by different methods. Surprisingly, we find that the {insufficent recall of code data may be a more general problem among open-source data scorers}. We consider three additional quality scorers: dolma3-fasttext, qurater and nemocurater, which more or less include code data as positive examples. For subsets (select 40\% from around 30M docs) using different methods, we compare two quantities: the number of code-related docs in the selected subsets, and the code modeling ability (bit-per-byte on code benchmark) of the models trained on these subsets:

\begin{table}[ht]
\renewcommand{\arraystretch}{1.35}
\centering
\caption{Recall of code-related docs and corresponding code modeling ability under different quality scorers. BBP($\times 10^{-2}$) for Code Mean.}
\label{code_recall}
\begin{center}
\begin{small}
\begin{tabular}{l|c|c|c}
\noalign{\vskip 0.5pt}
\noalign{\hrule height 0.75pt}
\noalign{\vskip 0.5pt}

\multirow{2}{*}{Method} & Number of & Reduction of Code Docs & \multirow{2}{*}{Code Mean \textdownarrow} \\
& Code Docs & relative to Lev & \\
\hline
Lev & 1879 & -0.00\% & 62.48 \\
DCLM-fastText & 1544 & -17.83\% & 63.66 \\
NemoCurater & 1378 & -26.66\% & 63.82 \\
Random & 1003 & -46.62\% & 64.46 \\
Qurater & 837 & -55.46\% & 64.01 \\
Dolma3-fastText & 712 & -62.11\% & 65.64 \\

\noalign{\vskip 0.5pt}
\noalign{\hrule height 0.75pt}
\noalign{\vskip 0.5pt}
\end{tabular}
\end{small}
\end{center}
\end{table}

In this table, we find that the code modeling ability is positively correlated to the recall of code-related docs. The more code-related docs are retrieved, the better the code modeling ability of the pre-trained model will be.

Results show that lev retrieves the most code-related content and the corresponding pretrained model shows the best code modeling ability, while {quality filters generally retrieve less code-related content and the corresponding pre-trained models show worse code modeling ability than Lev}. Two of the quality filters retain even less code-related content than Random (i.e. even more biased away from the original distribution).

To conclude, a number of quality filters, albeit include code-domain data in their training sets, do not necessarily achieve recall as high as Lev. This {demonstrates our claim that quality filters need diversity enhancement to retrieve more code-domain data.}

\subsection{Detailed analysis of the time complexity of Lev and DiSF}
\label{detailed_time}

We provide a more detailed algorithmic comparison between Lev and DiSF. We show that the speedup comes from the faster computation of the leverage scores than that of DiSF scores.

Both algorithms follow the simple framework of greedy selection, the procedure of which is:

\begin{enumerate}
    \item Divide the corpus data into batches of size $B$.
    \item For every batch $\mathcal{B}$, it repeatedly select samples as follows:
    \begin{enumerate}
        \item Compute the DiSF/Lev scores (with respect to the currently selected samples) for every unselected sample in the batch.
        \item Select the top-$k$ samples and add them to the selected subset.
    \end{enumerate}
    \item Repeat Step 2 until the number of the selected samples reaches the budget.
\end{enumerate}

Hence, {the difference of these two algorithm only consists in Step 2(a), namely the computation of DiSF scores or Lev scores.} Next, we compare the time complexity of computing Lev and DiSF in Step 2(a).

Suppose $n$ is the number of the currently selected samples, and $X\in\mathbb{R}^{n\times d}$ is the corresponding sample matrix. Suppose the sample matrix of the currently unselected samples is $Z\in\mathbb{R}^{(B-n)\times d}$, where $B$ is the batch size and $0\leq n \leq B$.

\textbf{Step 2(a) of Lev proceeds as follows.}
\begin{enumerate}
    \item Compute $G=1/n \cdot X^\top X+\lambda I_d$, which requires $O(nd^2)$ computation.
    \item Compute the lower triangular matrix $L$, which is the Cholesky decomposition of $G=LL^\top$. This requires $O(d^3)$ computation.
    \item Solving for matrix $W\in\mathbb{R}^{(B-n)\times d}$ such that $LW^\top=Z^\top$. Because $L$ is a lower triangular matrix, solving this equations using Gaussian elimination only takes $O((B-n)\times d^2)$ steps.
    \item Compute the $\ell_2$ norm of ever row of $W$, which is exactly the leverage score of every sample in $Z$. This step requires $O((B-n)\times d)$ steps.
\end{enumerate}

Because usually $B\geq d$, substep 3 of the above is dominant. {Hence, the time complexity of Step 2(a) in the greedy selection for Lev is $O(Bd^2)$.} The key step is that through the Cholesky decomposition over $G$, the leverage scores of all the samples can be computed all together through the above substep 3 and 4.

\textbf{Step 2(a) of DiSF proceeds as follows.}
\begin{enumerate}
    \item For every unselected sample $z$, which is some row of matrix $Z\in \mathbb{R}^{(B-n)\times d}$:
    \begin{enumerate}
        \item Append $z$ to the tail of $X$. We denote the new matrix as $\hat{X}$. This is $O(1)$.
        \item Normalize every column of $\hat{X}$ (requires computation of mean and variance value of every column). This is $O(n\times d)$.
        \item Compute $C=1/n\cdot \hat{X}^\top \hat{X}$. This requires $O(n\times d^2)$.
        \item Compute the Frobenius norm of $C$: $\vert\vert C \vert\vert_F^2$. This requires only $O(d^2)$.
    \end{enumerate}
\end{enumerate}

The most computationally intensive step is Step 1(c), which requires $O(n\times d^2)$. Because we have $B-n$ unselected samples, {the overall time complexity of Step 2(a) in the greedy selection for DiSF is $O((B-n)nd^2)$.}

Because $n$ is enumerated from $1,2,...\lceil \alpha B/k \rceil$, where $\alpha$ is the selection ratio (e.g., $\alpha=40\%$), by summing up over $n$, we obtain:

\begin{itemize}
    \item For Lev, Step 2 requires $\sum_{n}O(Bd^2)=O(\alpha B^2d^2)=O(B^2d^2)$.
    \item For DiSF, Step 2 requires $\sum_{n}O((B-n)nd^2)=\sum_{n}O(Bnd^2)=O(\alpha^2 B^3d^2)=O(B^3d^2)$.
\end{itemize}

{To conclude, the time complexity of DiSF is of one order over $B$ higher than Lev. Hence, from an algorithmic perspective, Lev is more efficient than DiSF.}

\begin{remark}
    {The column-wise normalization step in DiSF is necessary.} Without normalization, it becomes $\sum_{i=1}^{d}(\lambda_i - \bar{\lambda})^2=\vert\vert C \vert\vert_F^2-(Tr(C))^2/d=\vert\vert C \vert\vert_F^2-(\sum_{i=1}^{d}\lambda_i)^2/d=\vert\vert C \vert\vert_F^2-d(\bar{\lambda})^2$. Hence, $\vert\vert C \vert\vert_F^2 = \sum_{i=1}^{d} \lambda_i^2$. {Without normalization, minimizing $\vert\vert C \vert\vert_F^2$ becomes minimizing the eigenvalue magnitude instead of optimizing eigenvalue uniformity.} In contrast, Lev does not require column-wise normalization over the Gram matrix. By conducting Cholesky decomposition over the Gram matrix only once, the leverage scores of all the samples can be computed. This is the key difference that enables higher efficiency of Lev compared to DiSF.
\end{remark}




\end{document}